\begin{document}

%%% PAPER INFO
\twocolumn[
\aistatstitle{Graph-based Generalization Bounds for Learning Binary Relations}
% TO ANONYMIZE
%\aistatsauthor{Anonymous Authors}
%\aistatsaddress{Unknown Institutions}
% AUTHOR INFO
\aistatsauthor{Ben London \And Bert Huang \And Lise Getoor}
\newcommand{\UMDAddress}{University of Maryland \\ College Park, MD 20742, USA}
\aistatsaddress{
	\UMDAddress \\ \texttt{blondon@cs.umd.edu}
	\And \UMDAddress \\ \texttt{bert@cs.umd.edu}
	\And \UMDAddress \\ \texttt{getoor@cs.umd.edu}
	}
\runningtitle{Graph-based Generalization Bounds for Learning Binary Relations}
\runningauthor{London, Huang, Getoor}
]

%%% ABSTRACT
\begin{abstract}%
We investigate the generalizability of learned \define{binary relations}: functions that map pairs of instances to a logical indicator. This problem has application in numerous areas of machine learning, such as ranking, entity resolution and link prediction. Our learning framework incorporates an example labeler that, given a sequence $\X$ of $\npt$ instances and a desired training size $\ntr$, subsamples $\ntr$ pairs from $\X \times \X$ without replacement. The challenge in analyzing this learning scenario is that pairwise combinations of random variables are inherently dependent, which prevents us from using traditional learning-theoretic arguments. We present a unified, graph-based analysis, which allows us to analyze this dependence using well-known graph identities. We are then able to bound the generalization error of learned binary relations using Rademacher complexity and algorithmic stability. The rate of uniform convergence is partially determined by the labeler's subsampling process. We thus examine how various assumptions about subsampling affect generalization; under a natural random subsampling process, our bounds guarantee $\tilde\bigO(1/\sqrt{\npt})$ uniform convergence.
\end{abstract}

%%% KEYWORDS
%\begin{keywords}%
%binary relations, PAC learning, generalization bounds, non-i.i.d.\ data, chromatic graph theory, Rademacher complexity, algorithmic stability
%\end{keywords}

%%% INTRODUCTION
\section{Introduction}

% introduce binary relations, inductive inference setting
We investigate the generalizability of a learned \define{binary relation}: a characteristic function $\rel : \XS^2 \to \TwoClass$ that indicates whether the input pair satisfies a relation. For example, if $\rel$ is an equivalence relation, then $\rel(\x,\x') = 1$ indicates that $\x \equiv \x'$; if $\rel$ is a total ordering, then $\rel(\x,\x') = 1$ means that $\x \leq \x'$. Binary relations are found in many learning problems, such as ranking (total ordering), entity resolution (equivalence) and link prediction. There has been significant research in each of these fields individually, yet no unified view of the learning problem. We formulate the learning objective (in \autoref{sec:prelims}) as inductive inference on a product space $\XS^2$, where instances are drawn from an arbitrary distribution over $\XS$. Given a set of $\npt$ \define{independently and identically distributed} (i.i.d.) instances $\X \in \XS^{\npt}$, as well as a subset from $\X^2$ that has been labeled by $\rel$, our goal is to produce a hypothesis $\hyp : \XS^2 \to \TwoClass$ that, with high probability, has low error w.r.t.\ $\rel$.

% challenge
The primary challenge in analyzing this learning setup is reasoning about the dependency structure of a pairwise training set. Even though the instances are i.i.d., the training examples may not be; since each instance may appear in multiple examples, those that involve a common instance are necessarily dependent. This dependence makes the analysis of generalization nontrivial, since it violates the fundamental assumption of independence used in classical statistics and learning theory, rendering existing results incompatible. When the training set does not simply contain all pairs of instances, 
%one can analyze the dependence using a combinatorial argument, but the analysis becomes nontrivial for arbitrary subsets of this set. 
its dependency structure is difficult to analyze. 
 
% our method, contributions
In \autoref{sec:graphical_rep}, we introduce graphical representations of the training set and its dependency structure. The training set is viewed as a graph $\Graph$, in which each instance is a vertex and each pairwise example is an (undirected) edge. The dependency structure is represented by the corresponding line graph $\DGraph$, since edges that share a common vertex (instance) are adjacent in $\DGraph$. Casting the dependency structure as a graph allows us to leverage the vast literature of graph theory, which makes our analysis cleaner. The ``amount of dependence" can be quantified by the chromatic number of $\DGraph$, since the chromatic number is the minimal number of independent sets. Using well-known chromatic properties, we are able to upper bound this quantity. We then use graph-based arguments in \autoref{sec:risk_bounds} to bound the generalization error of learned binary relations, using Rademacher complexity or algorithmic stability. We show that the empirical error converges to the expected error at a rate of $\bigO(\sqrt{\rcnt / \ntr})$, where $\ntr$ is the number of labeled training pairs and $\rcnt$ is the maximum frequency of any instance in this set. This ratio depends on how pairs are subsampled (irrespective of their values). Consequently, in \autoref{sec:subsampling}, we explore several learning scenarios in which the subsampling process is working for, against, or is agnostic to the learner (i.e., random). Using a reduction to a random graph model, we prove interesting results for the agnostic scenario. We thus provide a novel analysis of the relationship between subsampling and the rate of uniform convergence for learned binary relations.

% RELATED WORK
\subsection{Related Work}

\citet{goldman:siam93} were the first to address learning a binary relation. Their learning setting is entirely different from ours, in that there is no underlying distribution from which instances are generated, and they are not interested in generalization to unseen data. In their setting, the problem size is polynomial in a finite number of objects. Our learning setting subsumes this one.

Our main analytic tools come from two areas of learning theory: Rademacher complexity and algorithmic stability. The canonical Rademacher bounds are due to \citet{koltchinskii:as02} and \citet{bartlett:jmlr03}, while the canonical stability bounds are due to \citet{bousquet:jmlr02}. Though all of these techniques fundamentally rely on independence, there have been recent applications to non-i.i.d.\ problems. Inspired by \citet{janson:rsa04}, \citet{usunier:nips05} developed a theory of \emph{chromatic} Rademacher analysis, using it to derive generic risk bounds for dependent data, with bipartite ranking as a motivating example. (\citet{ralaivola:jmlr10} used a similar chromatic analysis to derive PAC-Bayes bounds.) Our analysis draws on this work, though we delve deeper into the setting of pairwise prediction. More recently, \citet{mohri:nips08,mohri:jmlr10} used an ``independent blocking" technique to derive both Rademacher- and stability-based risk bounds for time series data drawn from a strongly-mixing stationary process, though this is an entirely different form of dependence.

A number of authors \citep{barhillel:colt03,clemencon:as08,jin:nips09} have developed risk bounds for problems involving pairwise prediction (or learning with pairwise constraints), though they all assume that the training set contains all pairwise combinations of the instances, ignoring the more interesting problem of learning from a (sparse) subsample. Our results are not only more general, but they offer insight into how the subsampling process affects the rate of convergence. \citet{agarwal:jmlr09} consider a random subsampling process similar to one we discuss and derive risk bounds for ranking using algorithmic stability. We provide an alternate analysis that is clean, natural and interpretable.
%While their result is similar to ours, our analysis is cleaner, and our bounds are more interpretable.

%%% PRELIMS
\section{Preliminaries}
\label{sec:prelims}

This section introduces the necessary background concepts. To clarify certain probabilities, we let $\Pr_{\omega}[\cdot]$ denote the probability of an event w.r.t.\ a random draw of $\omega$ from an implicit sample space $\Omega$. Similarly, let $\Ep_{\omega}[\cdot]$ denote the expectation of a random variable w.r.t.\ a random draw of $\omega \in \Omega$, according to an implicit probability distribution.

% PROBLEM SETTING
\subsection{Problem Setting}
\label{sec:problem_setting}

% problem domain
Let $\XS$ denote an abstract instance space (e.g., $\XS$ could be a subset of Euclidean space). We are interested in learning a binary relation $\rel : \XS^2 \to \TwoClass$. We will limit our analysis to relations that are \define{reflexive}, and either \define{symmetric} or \define{antisymmetric}, examples of which include equivalence and total ordering. 

We define the learning process as follows. We are given a sequence of instances $\X \defeq (\x_1,\dots,\x_{\npt}) \in \XS^{\npt}$, drawn independently and identically from an arbitrary distribution over $\XS$. We are also given access to a \define{labeler} $\Oracle$, a black-box process that returns a subset of pairwise labeled examples. More precisely, given an input $\X$ and a training size $\ntr$, $\Oracle_{\ntr}(\X)$ returns some subset of $\X^2$ (sampled without replacement) that has been labeled according to $\rel$. In practice, this could be a crowd-sourcing application or a targeted surveying process. The subset returned by $\Oracle_{\ntr}(\X)$ is sampled according to a process that is independent of the instance values $\X$. Restricting our theory to consider only labelers that determine \emph{which} pairs to label, independent of $\X$, ensures that the observed data is drawn from the original distribution; otherwise, the labeler could introduce bias. Within this restriction, the subsampling process remains unknown. It may be adversarial or benign, aiming to either weaken or improve generalization; it may also be agnostic, selecting pairs according to some random process. We will return to the topic of subsampling in \autoref{sec:subsampling}.

% undirected graph
Due to our assumption of (anti)symmetry, only one example per pair is required, since the converse (or contrapositive) can be inferred from context. This means that $\ntr$ is naturally upper bounded by $\npt \choose 2$. Upon receiving a labeled dataset $\Data \gets \Oracle_{\ntr}(\X)$, we invoke a learning algorithm $\Algo$ to obtain an \define{inductive} hypothesis $\hyp$ from a class $\HS$ of functions from $\XS^2$ to $\TwoClass$. We let $\hyp_{\Data}$ (or, more explicitly, $\hyp_{\Data} \gets \Algo(\Data)$) denote the hypothesis trained on the dataset $\Data$.

% GENERALIZATION
\subsection{Generalization}
\label{sec:generalization}

For a given \define{cost function} $\cost : \Reals^2 \to \Reals^+$, define the \define{loss} $\loss$ of a hypothesis $\hyp$ w.r.t.\ a pair $\z \defeq (\x,\x')$ as $\loss(\hyp,\z) \defeq \cost(\rel(\z),\hyp(\z))$. Though there are various cost functions, for learning binary relations, we are most interested in the so-called \define{0-1 loss}, $\lossone(\hyp,\z) \defeq \1[\rel(\z) \neq \hyp(\z)]$.

The quantity we are primarily concerned with is the \define{generalization error}, or \define{risk}. This is the expected loss of a learned hypothesis $\hyp_{\Data}$ w.r.t.\ a random pair $\tilde\z \defeq (\x,\x')$, where $\x$ and $\x'$ are sampled independently and identically from the underlying distribution over $\XS$. We denote this quantity by $\Risk(\hyp_{\Data}) \defeq \Ep_{\tilde\z}[ \loss(\hyp_{\Data},\tilde\z) ]$. In practice, we can estimate the true risk with the \define{training error}, computed as the average loss over all examples in the training set. We denote this by $\Remp(\hyp_{\Data}) \defeq \tfrac{1}{\ntr} \sum_{\z \in \Data} \loss(\hyp_{\Data},\z)$.

Given an empirical risk estimate, we would like to bound its deviation from the true risk. We will refer to this quantity as the \define{defect}, which we denote by $\Defect(\hyp_{\Data}) \defeq \Risk(\hyp_{\Data}) - \Remp(\hyp_{\Data})$. We will use the canonical \define{probably approximately correct} (PAC) framework \citep{valiant:stoc84}, in which we allow the defect to exceed an arbitrary $\eps > 0$ with probability $\fail \in (0,1)$. There are a number of ways of analyzing this probability, of which we will focus on hypothesis complexity and algorithmic stability. Analyses of this nature typically aim to bound
\begin{equation*}
\label{eq:fail_prob}
\Pr\left[ \sup_{\hyp\in\HS} \Defect(\hyp) \geq \eps \right]
~~~\text{or}~~~
\Pr\left[ \sup_{\hyp_{\Data}\gets\Algo_{\Data}} \Defect(\hyp_{\Data}) \geq \eps \right].
\end{equation*}
Solving for $\eps$, one obtains a probabilistic upper bound on the true risk, parameterized by $\fail$.

%%% DEPENDENCE & REPRESENTATION
\section{Graphical Representation}
\label{sec:graphical_rep}

In this section, we analyze the learning problem and its inherent dependencies using a graphical representation.

% TRAINING DATA
\subsection{Training Data}
\label{sec:training_data}

Recall that the training set, returned by the labeler $\Oracle$, is an arbitrary subset of $\X^2$ that has been labeled according to $\rel$.
We can represent this as a graph $\Graph \defeq (\Nodes,\Edges)$, in which each vertex $\node_i \in \Nodes$ corresponds to an instance $\x_i$, and each training example $(\x_i,\x_j,\rel(\x_i,\x_j)) \in \Data$ defines an (un)directed edge $(i,j) \in \Edges$. Thus, the subsampling pattern reduces to a graph on the instances $\X$. Note that, if $\rel$ is antisymmetric, then $\Graph$ will be directed; however, since only one example per pair is needed, the corresponding undirected graph is simple (i.e., not a multigraph). This graph representation simplifies the analysis of generalization and allows the labeler's subsampling to be viewed as one of many well-studied graph generation processes.

% random graph model
For example, we can consider an agnostic, random labeler that selects pairs uniformly at random. Using the above graphical representation, this subsampling process can be modeled by the Erd\"{o}s-R\'{e}nyi random graph model $\ERGraph(\npt,\ntr)$. In this model, a graph with $\npt$ vertices and $\ntr$ edges is chosen uniformly from the set of all such graphs. (This model differs slightly from the popular $\ERGraph(\npt,p)$ model, in which each edge is realized independently with probability $p$.) We analyze generalization using various labeler scenarios, including the $\ERGraph(\npt,\ntr)$ labeler, in \autoref{sec:subsampling}.

% DEPENDENCY STRUCTURE
\subsection{Dependency Structure}
\label{sec:dependence}

%To analyze the concentration of the empirical loss, we define the following set of random variables. 
For each instance $\x_i$, define a random variable $\X_i$, and recall that these are i.i.d. For each example pair $(\x_i,\x_j)$ found in $\Data$, define a random variable $\Z_{i,j} \defeq (\X_i,\X_j)$. Because each instance may appear in multiple pairs, we have that these random variables are not mutually independent. To make this more concrete, consider the set $\{\Z_{1,2},\Z_{2,3},\Z_{3,4}\}$. Clearly, $\Z_{1,2}$ and $\Z_{2,3}$ are dependent, since they include a common variable, $\X_2$. Now consider variables $\Z_{1,2}$ and $\Z_{3,4}$, and note that $\{1,2\} \cap \{3,4\} = \emptyset$. Observing $(\X_1,\X_2)$ reveals nothing about $(\X_3,\X_4)$; thus, $\bbP(\Z_{3,4} \| \Z_{1,2}) = \bbP(\Z_{3,4})$, and vice versa, so they are mutually independent.

We will represent the dependency structure using a graphical representation due to \citet{erdos:lll75}, known as a \define{dependency graph}.
%
% dependency graph
\begin{definition}[Dependency Graph]
\label{def:dep_graph}
Let $\vec\Z \defeq \{\Z_i\}_{i=1}^{\npt}$ be a set of random variables with joint distribution $\bbP(\vec\Z)$, and let $\DGraph \defeq (\Nodes,\Edges)$ be a graph, with $\Nodes \defeq \seq{\npt}$. Then $\DGraph$ forms a \define{dependency graph} w.r.t.\ $\vec\Z$ if every \define{independent set} (i.e., set of non-adjacent vertices) $\Iset \subseteq \Nodes$ satisfies $\bbP( \{\Z_i\}_{i\in\Iset} ) = \prod_{i \in \Iset} \bbP(\Z_i)$.
\end{definition}

% dependency graph for binary relations
In the context of learning binary relations, we construct a dependency graph $\DGraph$ in which each vertex $\node_{i,j}$ is connected to vertices $\{ \node_{k,\ell} : (k=i) \vee (\ell=j) \}$---i.e., the set of vertices that involve instances $\x_i$ or $\x_j$. As a result, any independent set of vertices (in the graph-theoretic sense) is a set of independent random variables. To better understand the structure of this dependency graph, it helps to recall the graph $\Graph$ defined in the previous section; $\DGraph$ is in fact its corresponding \define{line graph}, a graph representing the adjacencies between edges.\footnote{If $\Graph$ is directed, then $\DGraph$ is the line graph of its corresponding undirected graph.} Therefore, every independent set in $\DGraph$ is a \define{matching} in $\Graph$.

% CHROMATIC PROPERTIES
\subsection{Chromatic Properties}
\label{sec:graph_coloring}

In this section, we discuss the chromatic properties of the above graphical representations, which will introduce a key lemma used in our generalization bounds. For completeness, we first review some background on graph coloring. For the following definitions, let $\Graph \defeq (\Nodes,\Edges)$ be an arbitrary undirected graph.
%
% edge coloring
\begin{definition}[Vertex Coloring]
\label{def:vertex_coloring}
A \define{proper $\ncolors$-vertex-coloring} (often simply referred to as simply a $\ncolors$-coloring) is a mapping from $\Nodes$ to a set of $\ncolors$ color classes, such that no two adjacent vertices have the same color. Equivalently, it is a partitioning $\{ \Color_j : \Color_j \subseteq \Nodes \}_{j=1}^{\ncolors}$, such that $\bigcup_{j=1}^{\ncolors} \Color_j = \Nodes$, $\bigcap_{j=1}^{\ncolors} \Color_j = \emptyset$, and every subset $\Color_j$ is independent. The \define{chromatic number} $\colnum(\Graph)$ is the minimum number of colors needed to achieve a proper coloring.
\end{definition}
%
% edge coloring
\begin{definition}[Edge Coloring]
\label{def:edge_coloring}
A \define{proper $\ncolors$-edge-coloring} is a mapping from $\Edges$ to a set of $\ncolors$ color classes, such that no two coincident edges have the same color. The \define{chromatic index} $\colidx(\Graph)$ is the minimum number of colors needed to achieve a proper edge coloring.
\end{definition}
%
% Vizing's theorem
\begin{theorem}[\citealp{vizing:coloring64}]
\label{th:vizing}
If $\Graph$ has maximum degree $\mxdeg(\Graph)$, then $\mxdeg(\Graph) \leq \colidx(\Graph) \leq \mxdeg(\Graph) + 1$.
\end{theorem}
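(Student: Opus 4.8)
The plan is to treat the two inequalities separately; only the upper bound is substantial. Writing $\Delta \defeq \mxdeg(\Graph)$, the lower bound $\Delta \le \colidx(\Graph)$ is immediate, since the $\Delta$ edges incident to a maximum-degree vertex are pairwise coincident and must therefore receive distinct colors. For the upper bound $\colidx(\Graph) \le \Delta + 1$ I would induct on $|\Edges|$. Deleting one edge $e \defeq (x,y_0)$, the inductive hypothesis supplies a proper $(\Delta+1)$-edge-coloring of the smaller graph, and the task reduces to extending that coloring to $e$.

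The engine of the argument is that, because each vertex has degree at most $\Delta$ while $\Delta+1$ colors are available, every vertex $v$ has a nonempty set $M(v)$ of \emph{missing} colors (colors absent from its incident edges). If $M(x) \cap M(y_0) \neq \emptyset$, we color $e$ with a common missing color and finish, so assume otherwise. I would then build a maximal \emph{Vizing fan}: a sequence of distinct neighbors $y_0, y_1, \dots, y_k$ of $x$ in which the color of $xy_i$ lies in $M(y_{i-1})$ for each $i \ge 1$. If some color is missing at both $x$ and the terminal vertex $y_k$, then \emph{rotating} the fan---reassigning to each $xy_i$ the former color of $xy_{i+1}$ and giving $xy_k$ that common color---produces a proper coloring in which $e$ is colored, since every reassigned color was missing at its receiving endpoint.

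The remaining case, $M(x) \cap M(y_k) = \emptyset$, is where the real work lies and requires a Kempe-chain argument. Fix $\alpha \in M(x)$ and $\beta \in M(y_k)$ with $\alpha \neq \beta$; since $\beta \notin M(x)$, the unique $\beta$-colored edge at $x$ runs to some fan vertex $y_j$ (by maximality of the fan), and the fan property forces $\beta \in M(y_{j-1})$. Consider the maximal path $P$ whose edges alternate between colors $\alpha$ and $\beta$ and which starts at $x$; swapping these two colors along $P$ preserves properness. The crux is that $P$ has only one endpoint besides $x$, so it cannot terminate at both $y_{j-1}$ and $y_k$; whichever of these two vertices the swap leaves suitably situated becomes the new terminus for a rotation---of the full fan $y_0,\dots,y_k$ or of the truncated fan $y_0,\dots,y_{j-1}$---that colors $e$. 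The main obstacle is precisely this endpoint bookkeeping: one must verify that the color swap does not corrupt the fan property along the edges it touches and that the single alternating path genuinely forces one of the two rotations to succeed. Carrying this out carefully completes the induction.
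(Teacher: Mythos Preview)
The paper does not prove this theorem; it is stated as a classical result attributed to \citet{vizing:coloring64} and used as a black box, with no argument supplied. Your proposal is the standard proof via Vizing fans and Kempe chains, and the outline is correct, including your identification of the $\alpha$--$\beta$ endpoint bookkeeping as the crux.

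One small sharpening of that crux: rather than swapping the $\alpha$--$\beta$ path $P$ that starts at $x$, it is cleaner to look at the $\alpha$--$\beta$ components containing $y_k$ and $y_{j-1}$. Both vertices miss $\beta$, so each is an endpoint of its component; since $x$'s component has exactly one endpoint other than $x$, at least one of $y_k$, $y_{j-1}$ lies in a component disjoint from $x$. Swapping \emph{that} component makes $\alpha$ missing at the chosen $y$ while leaving $\alpha$ missing at $x$ and, crucially, touching no edge incident to $x$---so the fan edges and the fan property are untouched and the corresponding rotation goes through. Your version (swap $P$ from $x$) changes the color of $xy_j$ and hence the fan itself, which is exactly the ``corruption'' worry you raise; routing the swap away from $x$ sidesteps it entirely.
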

For a dependency graph, the chromatic number can be viewed roughly as the ``amount of dependence". The lower the chromatic number, the more independence. If the variables are i.i.d., then the chromatic number is 1. Coloring the vertices of the dependency graph $\DGraph$ described in \autoref{sec:dependence} can be reduced to coloring the edges of the graph $\Graph$ described in \autoref{sec:training_data}, since one is simply the line graph of the other. The chromatic index of $\Graph$ is equal to the chromatic number of $\DGraph$, so bounding one quantity bounds the other. Although there are many graphs for which $\colidx(\Graph) = \mxdeg(\Graph)$, determining the chromatic index of an arbitrary graph is NP-hard, so we will rely on the upper bound. We can therefore state the amount of dependence in terms of the chromatic index of $\Graph$. Note that the maximum degree $\mxdeg(\Graph)$ is simply the maximum frequency of any instance in the training set, which yields the following lemma.
%
% chromatic number of a dependency graph
\begin{lemma}
\label{lem:chrom_number}
Let $\X \in \XS^{\npt}$ be a set of $\npt$ i.i.d.\ instances, and $\Data \gets \Oracle_{\ntr}(\X)$ an arbitrary training set of size $\ntr$, with maximum instance frequency $\rcnt$. Let $\DGraph$ be the corresponding dependency graph of $\Data$. Then, $\colnum(\DGraph) \leq \rcnt+1$.
\end{lemma}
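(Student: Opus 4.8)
The plan is to reduce the vertex-coloring problem on the dependency graph $\DGraph$ to an edge-coloring problem on $\Graph$, and then invoke Vizing's theorem (\autoref{th:vizing}). The entire argument rests on the fact, established in \autoref{sec:dependence}, that $\DGraph$ is the line graph of $\Graph$.

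First I would make the line-graph correspondence explicit. By construction, the vertices of $\DGraph$ are the edges of $\Graph$, and two such vertices are adjacent in $\DGraph$ exactly when the corresponding edges of $\Graph$ are coincident (i.e., share an instance). Consequently, a proper vertex coloring of $\DGraph$ assigns a color to each edge of $\Graph$ so that no two coincident edges receive the same color, which is precisely a proper edge coloring of $\Graph$ in the sense of \autoref{def:edge_coloring}. This gives a bijection between proper vertex colorings of $\DGraph$ and proper edge colorings of $\Graph$ using the same color classes, so the minimum counts agree: $\colnum(\DGraph) = \colidx(\Graph)$. (If $\rel$ is antisymmetric, so that $\Graph$ is directed, we work with its underlying undirected simple graph, as the footnote notes; edge coincidence is unaffected by orientation, so the identity is unchanged.)

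Next I would bound $\colidx(\Graph)$. The degree of the vertex $\node_i$ in $\Graph$ counts how many training pairs in $\Data$ involve the instance $\x_i$, so the maximum degree $\mxdeg(\Graph)$ is exactly the maximum instance frequency; that is, $\mxdeg(\Graph) = \rcnt$. Applying the upper bound of \autoref{th:vizing} then yields $\colidx(\Graph) \leq \mxdeg(\Graph) + 1 = \rcnt + 1$, and combining this with $\colnum(\DGraph) = \colidx(\Graph)$ gives $\colnum(\DGraph) \leq \rcnt + 1$, as claimed. I do not expect a genuine obstacle here, since the argument is essentially bookkeeping; the one point deserving care is verifying the line-graph identity $\colnum(\DGraph) = \colidx(\Graph)$ rather than taking it for granted, namely checking that adjacency in $\DGraph$ (vertices indexed by a shared instance) coincides with edge coincidence in $\Graph$, and that the directed case reduces cleanly to the undirected one.
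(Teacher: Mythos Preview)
Your proposal is correct and mirrors the paper's own reasoning: the text preceding the lemma already argues that $\DGraph$ is the line graph of $\Graph$, so $\colnum(\DGraph) = \colidx(\Graph)$, and then Vizing's theorem together with $\mxdeg(\Graph) = \rcnt$ gives the bound. The only addition you make is spelling out the coloring bijection and the directed-to-undirected reduction more carefully, which is fine but not a departure from the paper's approach.
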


%%% RISK BOUNDS
\section{Generalization Bounds}
\label{sec:risk_bounds}

In this section, we develop risk bounds for learning binary relations using both the Rademacher complexity and algorithmic stability.

% TAIL BOUNDS
\subsection{Concentration Inequalities}

A key component in any generalization analysis is the concentration of random variables. We now present two tail bounds that will be used in our proofs.
%
% McDiarmid's
\begin{theorem}[\citealp{mcdiarmid:sc89}]
\label{th:mcdiarmid}
Let $\vec\Z \defeq \{\Z_i\}_{i=1}^{\npt}$ be a set of i.i.d.\ random variables that take values in $\ZS$. Let $\func : \ZS^{\npt} \to \Reals$ be a measurable function for which there exist constants $\{\lconst_i\}_{i=1}^{\npt}$ such that, for any $i \in \seq{\npt}$, and any inputs $\Data,\Data' \in \ZS^{\npt}$ that differ only in the $i\nth$ variable, $\ab{ \func(\Data) - \func(\Data') } \leq \lconst_i$.
Then, for any $\eps > 0$,
\begin{equation*}
\label{eq:tail_bound}
\Pr\left[ \func(\vec\Z) - \Ep[\func(\vec\Z)] \geq \eps \right]
	\leq \exp\( \frac{-2\eps^2}{\sum_{i=1}^{\npt}\lconst_i^2} \).
\end{equation*}
\end{theorem}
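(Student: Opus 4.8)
The plan is to prove this bounded-differences inequality via the \emph{Doob martingale} method. The idea is to express the centered quantity $\func(\vec\Z) - \Ep[\func(\vec\Z)]$ as a telescoping sum of martingale differences, bound the conditional range of each difference using the bounded-differences hypothesis, apply Hoeffding's lemma term by term, and finish with an exponential (Chernoff) tail bound.

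First I would set up the martingale. For each $i \in \{0,1,\dots,\npt\}$, define the partial conditional expectation $V_i \defeq \Ep[\func(\vec\Z) \mid \Z_1,\dots,\Z_i]$, so that $V_0 = \Ep[\func(\vec\Z)]$ and $V_{\npt} = \func(\vec\Z)$. Setting $D_i \defeq V_i - V_{i-1}$, the tower property gives $\Ep[D_i \mid \Z_1,\dots,\Z_{i-1}] = 0$, and telescoping yields $\func(\vec\Z) - \Ep[\func(\vec\Z)] = \sum_{i=1}^{\npt} D_i$. The crux is then to control the conditional range of each $D_i$. Fixing the past $\Z_1,\dots,\Z_{i-1}$, the difference $D_i$ is a function of $\Z_i$ alone, and I would define $L_i$ and $U_i$ as the infimum and supremum of $\Ep[\func \mid \Z_1,\dots,\Z_{i-1},\Z_i = z] - V_{i-1}$ over $z$. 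The key claim is that $U_i - L_i \leq \lconst_i$: because the coordinates $\Z_{i+1},\dots,\Z_{\npt}$ are independent of $\Z_i$, substituting two values $z,z'$ for $\Z_i$ and marginalizing over the remaining coordinates produces an average of terms of the form $\func(\dots,z,\dots) - \func(\dots,z',\dots)$, each bounded in absolute value by $\lconst_i$ via the hypothesis.

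With the conditional range bounded, I would invoke Hoeffding's lemma in conditional form: if $\Ep[D_i \mid \Z_1,\dots,\Z_{i-1}] = 0$ and $D_i$ lies in an interval of width at most $\lconst_i$ given the past, then $\Ep[e^{s D_i} \mid \Z_1,\dots,\Z_{i-1}] \leq \exp(s^2 \lconst_i^2 / 8)$ for every $s > 0$. Peeling off the factors of $\Ep[\exp(s\sum_{i} D_i)]$ one at a time using the tower rule then gives $\Ep[\exp(s\sum_{i} D_i)] \leq \exp\!\big(s^2 \sum_{i} \lconst_i^2 / 8\big)$. Finally, the Chernoff bound $\Pr[\sum_{i} D_i \geq \eps] \leq e^{-s\eps}\,\Ep[\exp(s\sum_{i} D_i)]$, optimized at $s = 4\eps / \sum_{i}\lconst_i^2$, collapses to the stated $\exp\!\big(-2\eps^2 / \sum_{i}\lconst_i^2\big)$.

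I expect the main obstacle to be the conditional-range bound $U_i - L_i \leq \lconst_i$ of the second step: one must argue carefully that, because coordinate $i$ is independent of the future coordinates, perturbing $\Z_i$ while integrating out $\Z_{i+1},\dots,\Z_{\npt}$ shifts the conditional mean by no more than the single-coordinate sensitivity $\lconst_i$, even though the martingale difference itself is defined through nested conditional expectations. Once this is established, the remainder is the standard Hoeffding-lemma-plus-Chernoff machinery and the one-line optimization over $s$.
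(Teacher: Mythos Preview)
Your proposal is correct and follows the standard Doob-martingale proof of McDiarmid's inequality. The paper itself does not prove this statement at all: it is quoted as a known concentration inequality with a citation to \citet{mcdiarmid:sc89}, so there is no in-paper proof to compare against. What you have written is essentially the original argument from that reference, and the point you flag as the main obstacle (bounding the conditional range $U_i - L_i$ by $\lconst_i$ via independence of $\Z_i$ from the future coordinates) is exactly the step where the bounded-differences hypothesis enters; your explanation of it is accurate.
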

%
% chromatic tail bound
\begin{theorem}[\citealp{usunier:nips05}]
\label{th:chrom_tail_bound}
Let $\DGraph$ be a dependency graph (\autoref{def:dep_graph}) for a set of random variables $\vec\Z \defeq \{\Z_i\}_{i=1}^{\npt}$ that take values in $\ZS$. Let $\{\vec\Z_j\}_{j=1}^{\colnum(\DGraph)}$ denote the subsets induced by an optimal proper coloring of $\DGraph$, and let $\npt_j \defeq \card{\vec\Z_j}$. Finally, let $\func : \ZS^{\npt} \to \Reals$ be a measurable function where: (a) there exist functions $\{ \func_j : \ZS^{\npt_j} \to \Reals \}_{j=1}^{\colnum(\DGraph)}$, such that $\func(\vec\Z) = \sum_j \func_j(\vec\Z_j)$; (b) there exists a constant $\lconst$ such that every $\func_j$ is $\lconst$-Lipschitz w.r.t.\ the Hamming metric. Then, for any $\eps > 0$,
\begin{equation*}
\label{eq:tail_bound}
\Pr\left[ \func(\vec\Z) - \Ep[\func(\vec\Z)] \geq \eps \right]
	\leq \exp\( \frac{-2\eps^2}{\colnum(\DGraph) \npt\lconst^2} \).
\end{equation*}
\end{theorem}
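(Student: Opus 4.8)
The plan is to bound the moment generating function (MGF) of $\func(\vec\Z) - \Ep[\func(\vec\Z)]$ and then apply the Chernoff--Cram\'er argument. The two ingredients I would combine are: (i) within each color class the variables are mutually independent, so a classical bounded-differences (McDiarmid-type) MGF bound applies to each $\func_j$ separately; and (ii) H\"older's inequality, which recombines the per-class MGFs while paying exactly a factor of $\colnum(\DGraph)$. Throughout I write $c \defeq \colnum(\DGraph)$ for brevity.

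First I would record the independence structure. By \autoref{def:dep_graph}, every independent set of vertices in $\DGraph$ indexes a set of mutually independent random variables. A proper coloring partitions the vertices into $c$ color classes, each an independent set, so each subset $\vec\Z_j$ consists of $\npt_j$ mutually independent variables. Hence $\func_j(\vec\Z_j)$ is a function of independent variables, and since $\func_j$ is $\lconst$-Lipschitz in the Hamming metric, changing any one coordinate of $\vec\Z_j$ alters $\func_j$ by at most $\lconst$. The standard MGF form of the bounded-differences inequality (the estimate underlying \autoref{th:mcdiarmid}) then gives, for every $s$,
\[
\Ep\!\left[ \exp\!\big( s\,(\func_j(\vec\Z_j) - \Ep[\func_j(\vec\Z_j)]) \big) \right] \leq \exp\!\left( \tfrac{1}{8} s^2 \npt_j \lconst^2 \right).
\]

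Next I would assemble these into a bound on the full MGF. Writing $\func - \Ep[\func] = \sum_{j=1}^{c} (\func_j - \Ep[\func_j])$, for any $t > 0$ I would apply H\"older's inequality to the product of the $c$ factors $\exp(t(\func_j - \Ep[\func_j]))$, each with conjugate exponent $c$ (so the exponents satisfy $\sum_j 1/c = 1$):
\[
\Ep\!\left[ e^{t(\func - \Ep[\func])} \right] = \Ep\!\left[ \prod_{j=1}^{c} e^{t(\func_j - \Ep[\func_j])} \right] \leq \prod_{j=1}^{c} \left( \Ep\!\left[ e^{t c (\func_j - \Ep[\func_j])} \right] \right)^{1/c}.
\]
Substituting the per-class bound with $s = tc$ and using $\sum_j \npt_j = \npt$ collapses the product to $\exp(\tfrac{1}{8} t^2 c\, \npt \lconst^2)$. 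Finally, the Chernoff bound $\Pr[\func - \Ep[\func] \geq \eps] \leq e^{-t\eps}\,\Ep[e^{t(\func-\Ep[\func])}]$, optimized at $t = 4\eps/(c\,\npt\lconst^2)$, yields the exponent $-2\eps^2/(c\,\npt\lconst^2)$, which is exactly the claimed $-2\eps^2/(\colnum(\DGraph)\,\npt\lconst^2)$.

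I expect the main obstacle to be the per-class MGF bound: establishing the sub-Gaussian MGF estimate for a Lipschitz function of independent variables is the analytic heart, and it is what forces the reliance on independence \emph{within} classes. The cross-class recombination is then a clean application of H\"older, and the final optimization over $t$ is routine. The conceptual subtlety worth emphasizing is that plain McDiarmid cannot be applied directly to $\func$ over all $\npt$ variables---even though each coordinate has bounded difference $\lconst$---because the variables are only independent within color classes, not jointly; it is precisely this gap that the coloring-plus-H\"older step bridges, at the cost of the factor $\colnum(\DGraph)$, and which recovers ordinary McDiarmid when $\colnum(\DGraph) = 1$.
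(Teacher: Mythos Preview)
The paper does not give its own proof of this statement; it is quoted from \citet{usunier:nips05} (building on \citet{janson:rsa04}) and used as a black-box concentration tool. So there is no in-paper argument to compare against. Your proposal is correct and is precisely the standard Janson--Usunier argument: a per-class sub-Gaussian MGF bound from bounded differences, recombined across the $\colnum(\DGraph)$ color classes via H\"older with uniform conjugate exponents, followed by Chernoff optimization. The arithmetic checks out: with $s=t\,\colnum(\DGraph)$ the $j$th factor contributes $\exp\!\big(\tfrac{1}{8}t^2\colnum(\DGraph)\,\npt_j\lconst^2\big)$, the product collapses via $\sum_j \npt_j=\npt$, and optimizing $t$ gives the stated exponent. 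Your closing remark about why one cannot apply \autoref{th:mcdiarmid} directly to $\func$ is also the right diagnosis.
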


% RADEMACHER RISK BOUNDS
\subsection{Rademacher Complexity}
\label{sec:rademacher}
 
Informally, the Rademacher complexity measures a hypothesis class' expressive power, quantified by its ability to fit a random signal.
%The more expressive the hypothesis class, the better it is able to explain the randomness. 
We slightly adapt the traditional definition to better suit our learning context.
%
% rademacher complexity
\begin{definition}[Rademacher Complexity]
\label{def:rademacher}
Let $\XS$ be an instance space, and $\ZS$ some alternate space. Let $\map : \XS^{\npt} \to \ZS^{\ntr}$ denote a mapping from $\npt$ instances from $\XS$ to $\ntr$ instances from $\ZS$. Let $\radvars$ be a set of \define{Rademacher variables} that independently take values in $\TwoClass$ with equal probability. For a class $\FS$ of functions from $\ZS$ to $\Reals$, define the \define{empirical Rademacher complexity}
%\footnote{The empirical Rademacher complexity is sometimes defined using the absolute value of the summation \citep{bartlett:jmlr03}. The above definition is, for all intents and purposes, equivalent, and will be better suited for our forthcoming analysis.} 
of $\FS \circ \map$, w.r.t.\ instances $\X \in \XS^{\npt}$, as
\begin{equation*}
\label{eq:emp_rademacher}
\Rad_{\X}(\FS \circ \map)
	\defeq \Ep_{\radvars}\left[ \frac{2}{\ntr} \sup_{\func\in\FS} \ab{\sum_{i=1}^{\ntr} \radv_i \func(\z_i)} \right].
\end{equation*}
Define the \define{Rademacher complexity} of $\FS \circ \map$, w.r.t.\ i.i.d.\ samples of size $\npt$, as $\Rad_{\npt}(\FS \circ \map) \defeq \Ep_{\X}[ \Rad_{\X}(\FS \circ \map) ]$.
\end{definition}
In our scenario, the data may exhibit dependencies; yet, as we will see, this does not affect the traditional symmetry argument used in Rademacher analysis.
%
% generic bound
\begin{theorem}
\label{th:rad_generic_bound}
Let $\X \in \XS^{\npt}$ be an i.i.d.\ sample of $\npt$ instances, and $\Data \gets \Oracle_{\ntr}(\X)$ an arbitrary training set of size $\ntr$, with maximum instance frequency $\rcnt$. Let $\FS$ be a class of functions from $\ZS \defeq \XS^2$ to $[0,1]$. Then, for any $\npt \geq 2$, any $\ntr \geq 1$, any $\func \in \FS$, and any $\fail \in (0,1)$, with probability at least $1 - \fail$ over draws of $\X$,
\begin{equation}
\label{eq:rad_generic_bound}
\Ep_{\tilde\z}[\func(\tilde\z)]
	\leq \frac{1}{\ntr}\sum_{\z\in\Data}\func(\z)
	+ \Rad_{\npt}(\FS\circ\Oracle_{\ntr}) + \sqrt{\frac{\rcnt+1}{2\ntr}\ln\frac{1}{\fail}}.
\end{equation}
%and
%\begin{equation}
%\label{eq:emp_rad_generic_bound}
%\Ep_{\z}[\func(\z)]
%	\leq \frac{1}{\ntr}\sum_{\z\in\Data}\func(\z)
%	+ \Rad_{\X}(\FS\circ\Oracle_{\ntr}) + 3\sqrt{\frac{\rcnt+1}{2\ntr}\ln\frac{2}{\fail}}.
%\end{equation}
\end{theorem}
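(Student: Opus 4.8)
The plan is to follow the canonical two-step Rademacher template, substituting the chromatic tail bound (\autoref{th:chrom_tail_bound}) for McDiarmid's inequality so as to accommodate the dependent training pairs. Writing the pairwise risk of any $\func\in\FS$ as $\Ep_{\tilde\z}[\func]$ and its empirical estimate as $\tfrac{1}{\ntr}\sum_{\z\in\Data}\func(\z)$, define the uniform defect
\begin{equation*}
\Phi(\X) \defeq \sup_{\func\in\FS}\left( \Ep_{\tilde\z}[\func] - \tfrac{1}{\ntr}\sum_{\z\in\Data}\func(\z) \right).
\end{equation*}
Because \autoref{eq:rad_generic_bound} holds for an arbitrary fixed $\func$ as soon as it holds for the supremum, it is enough to show that with probability at least $1-\fail$, $\Phi(\X) \leq \Rad_{\npt}(\FS\circ\Oracle_{\ntr}) + \sqrt{(\rcnt+1)\ln(1/\fail)/(2\ntr)}$. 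I would obtain this from two pieces: (i) a concentration bound $\Phi \leq \Ep_{\X}[\Phi] + \sqrt{(\rcnt+1)\ln(1/\fail)/(2\ntr)}$, and (ii) a symmetrization bound $\Ep_{\X}[\Phi] \leq \Rad_{\npt}(\FS\circ\Oracle_{\ntr})$.

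For step (i) I would apply \autoref{th:chrom_tail_bound} to $\Phi$, treated as a function of the $\ntr$ dependent pair-variables $\{\Z_{i,j}\}_{\z\in\Data}$. Their dependency graph is the line graph $\DGraph$, whose chromatic number obeys $\colnum(\DGraph)\leq\rcnt+1$ by \autoref{lem:chrom_number}; an optimal coloring partitions the pairs into matchings of $\Graph$, i.e.\ blocks of mutually independent variables, which furnishes the independent blocks the theorem's decomposition requires. Since $\func$ takes values in $[0,1]$, replacing a single pair perturbs the empirical average---and hence each block-restricted term---by at most $1/\ntr$, so the Hamming-Lipschitz constant is $\lconst=1/\ntr$. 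Feeding $\colnum(\DGraph)\leq\rcnt+1$, the count $\ntr$, and $\lconst=1/\ntr$ into the tail bound gives $\Pr[\Phi-\Ep_{\X}[\Phi]\geq\eps]\leq\exp(-2\eps^2\ntr/(\rcnt+1))$; inverting at level $\fail$ reproduces the stated deviation term. (A plain application of \autoref{th:mcdiarmid} to $\Phi$ as a function of the $\npt$ i.i.d.\ instances is also available---each instance touches at most $\rcnt$ pairs---but, using that the instance frequencies sum to $2\ntr$, it yields the looser constant $\rcnt$ in place of $(\rcnt+1)/2$, which is what motivates the chromatic route.)

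For step (ii) I would run the usual symmetrization. Introduce a ghost sample $\X'$, an i.i.d.\ copy of $\X$ carrying the same value-independent subsampling pattern, and denote by $\z'$ the ghost counterpart of a pair $\z$. Because every training pair joins two \emph{distinct} instances (the graph $\Graph$ is simple), the ghost empirical average is unbiased for $\Ep_{\tilde\z}[\func]$, so Jensen's inequality gives $\Ep_{\X}[\Phi] \leq \Ep_{\X,\X'}[\sup_{\func}\tfrac{1}{\ntr}\sum_{\z}(\func(\z')-\func(\z))]$. Attaching a Rademacher variable to each pair, splitting the supremum across the two samples, and invoking that $\X$ and $\X'$ are identically distributed would then collapse the right-hand side to $\Rad_{\npt}(\FS\circ\Oracle_{\ntr})$.

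The main obstacle is precisely this final collapse. In the i.i.d.\ setting a sign flip is realized by swapping a pair with its ghost, and the swap is measure preserving; here a single instance sits in several pairs, so exchanging the endpoints of one pair is inconsistent with the shared-instance structure, and the joint sign-symmetry the textbook proof leans on is not automatic. I would therefore have to argue explicitly that the dependence leaves the symmetry intact---most safely by symmetrizing \emph{within} each independent color class, where the pairs span disjoint instances and ordinary swaps are legitimate, and then recombining the classes. Confirming that this recombination does not inflate the single term $\Rad_{\npt}(\FS\circ\Oracle_{\ntr})$ is the delicate point on which the proof turns, and is where I would focus.
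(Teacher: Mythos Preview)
Your concentration step (i) is exactly what the paper does: color the line-graph dependency structure, bound $\colnum(\DGraph)\le\rcnt+1$ via \autoref{lem:chrom_number}, note the $(1/\ntr)$-Hamming-Lipschitz property of each color-restricted summand, and apply \autoref{th:chrom_tail_bound}.

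The divergence is entirely in step (ii), and it is consequential. You correctly locate the obstacle---a pair-level swap $\Z_i\leftrightarrow\Z_i'$ is not realized by any instance-level swap when an endpoint is shared across pairs---but your proposed remedy, symmetrizing inside each color class and then recombining, is precisely the route the paper is trying to \emph{avoid}. Per-color symmetrization followed by a union of suprema is Usunier's fractional Rademacher analysis, whose complexity term scales like $\sqrt{(\rcnt+1)/\ntr}$ rather than $1/\sqrt{\ntr}$; the paper explicitly flags this distinction as its improvement (see the paragraph after \autoref{th:rad_kernel_risk_bound}). Since the statement carries the ordinary $\Rad_{\npt}(\FS\circ\Oracle_{\ntr})$, the color-by-color fallback would not establish it as written.

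What the paper actually does is argue that the direct, global symmetrization is valid. Because the ghost dataset $\Data'$ is built from an independent copy $\X'$ using the \emph{same} subsampling pattern, the vectors $(\Z_1,\dots,\Z_{\ntr})$ and $(\Z_1',\dots,\Z_{\ntr}')$ are identically distributed, and---this is the key claim---after exchanging any single $\Z_i$ with its ghost $\Z_i'$, the two resulting $\ntr$-tuples remain identically distributed to one another. The paper uses this observation to justify attaching independent Rademacher signs to every term at once, then splits the supremum and lands on $\Rad_{\npt}(\FS\circ\Oracle_{\ntr})$ directly. So your first instinct---``argue explicitly that the dependence leaves the symmetry intact''---is the one to pursue; the per-color detour is not the safe harbor but the weaker bound.
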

\begin{proof}
With $\DGraph$ as the dependency graph of $\Data$, we invoke an optimal proper coloring, which partitions $\Data$ into $\colnum(\DGraph) \leq \rcnt+1$ (via \autoref{lem:chrom_number}) independent sets. We can consequently express the defect $\Defect(\func) \defeq \Ep_{\tilde\z}[\func(\tilde\z)] - \frac{1}{\ntr}\sum_{\z\in\Data}\func(\z)$ as a sum over functions of independent random variables. It is easy to show that each of these functions is $(1/\ntr)$-Lipschitz w.r.t.\ the Hamming metric. We therefore apply \autoref{th:chrom_tail_bound} and obtain
\begin{equation}
\label{eq:fail_prob_rad_defect}
\Pr_{\X}\left[ \sup_{\func\in\FS}\Defect(\func) - \Ep_{\X}[\sup_{\func\in\FS} \Defect(\func)] \geq \eps \right]
	\leq \exp\( \frac{-2 \eps^2 \ntr}{\rcnt+1} \),
\end{equation}
To bound $\Ep_{\X}[\sup_{\func\in\FS} \Defect(\func)]$, we start by imagining a ``ghost sample" $\X'$ of $\npt$ i.i.d.\ instances that have been labeled using the same pattern as $\Data$ to create $\Data'$. 
Note that $\Ep_{\tilde\z}[\func(\tilde\z)] = \Ep_{\X'}\left[ \frac{1}{\ntr}\sum_{\z'\in\Data'}\func(\z') \right]$ via linearity of expectation, since all $\z'$ have the same marginal distribution.
Using the ghost sample and Jensen's inequality, we have that
\begin{align}
\label{eq:exp_defect_bound1}
&\Ep_{\X}[\sup_{\func\in\FS} \Defect(\func)]
	= \Ep_{\X}\left[ \sup_{\func\in\FS} \Ep_{\tilde\z}[\func(\tilde\z)] - \frac{1}{\ntr}\sum_{\z\in\Data}\func(\z) \right] \nonumber\\
	&~~~~= \Ep_{\X}\left[ \sup_{\func\in\FS} \Ep_{\X'}\left[ \frac{1}{\ntr}\sum_{\z'\in\Data'}\func(\z') \right] - \frac{1}{\ntr}\sum_{\z\in\Data}\func(\z) \right] \nonumber\\
	&~~~~\leq \Ep_{\X,\X'}\left[ \sup_{\func\in\FS} \frac{1}{\ntr}\sum_{i=1}^{\ntr} \func(\z_i') - \func(\z_i) \right].
\end{align}
Now define a set of Rademacher variables $\radvars$, and define random variables $\{\Z_i\}_{i=1}^{\ntr}$ and $\{\Z_i'\}_{i=1}^{\ntr}$ for $\Data$ and $\Data'$ sets respectively. Because they are labeled using the same pattern, and have isomorphic dependency graphs, we have that $\bbP(\Z_1,\dots,\Z_{\ntr}) = \bbP(\Z_1',\dots,\Z_{\ntr}')$. In fact, if we exchange any $\Z_i$ and $\Z_i'$, we have that $\bbP(\Z_1,\dots,\Z_i',\dots,\Z_{\ntr}) = \bbP(\Z_1',\dots,\Z_i,\dots,\Z_{\ntr}')$. Thus, since every draw of $\radvars$ occurs with equal probability, we have via symmetry that
\begin{align*}
&\text{Eq. (\ref{eq:exp_defect_bound1})} ~
	\leq \Ep_{\X,\X',\radvars}\left[ \sup_{\func\in\FS} \frac{1}{\ntr} \sum_{i=1}^{\ntr} \radv_i(\func(\z_i') - \func(\z_i)) \right] \\
	&~~~~\leq \Ep_{\X,\radvars}\left[ \sup_{\func\in\FS} \frac{2}{\ntr} \ab{\sum_{i=1}^{\ntr} \radv_i \func(\z_i')} \right]
	\leq \Rad_{\npt}(\FS\circ\Oracle_{\ntr}).
\end{align*}
and so $\Ep_{\X}[\sup_{\func\in\FS} \Defect(\func)] \leq \Rad_{\npt}(\FS\circ\Oracle_{\ntr})$. 
To obtain \autoref{eq:rad_generic_bound}, we simply set \autoref{eq:fail_prob_rad_defect} equal to $\fail$ and solve for $\eps$.
%
%To bound $\Rad_{\npt}(\FS\circ\Oracle_{\ntr})$ to $\Rad_{\X}(\FS\circ\Oracle_{\ntr})$ with high probability, we define a function $\RadDefect(\X) \defeq \Rad_{\npt}(\FS\circ\Oracle_{\ntr}) - \Rad_{\X}(\FS\circ\Oracle_{\ntr})$. Note that $\Ep_{\X}[\RadDefect(\X)] = 0$; moreover, it is straightforward to show that $\RadDefect$ is $(2/\ntr)$-Lipschitz. Applying \autoref{th:chrom_tail_bound} again, we have that
%\begin{equation}
%\label{eq:fail_prob_emp_rad_defect}
%\Pr_{\X}\left[ \RadDefect(\X) - \Ep_{\X}[\RadDefect(\X)] \geq \eps \right]
%	\leq \exp\( \frac{-\eps^2 \ntr}{2(\rcnt+1)} \).
%\end{equation}
%We now have two ``failure" events, the probabilities of which are bounded by \autoref{eq:fail_prob_rad_defect} and \autoref{eq:fail_prob_emp_rad_defect}. Setting both equal to $\fail / 2$, we have that neither occur with probability $\geq 1-\fail$. We then solve for $\eps$ in both cases and combine the results to obtain \autoref{eq:emp_rad_generic_bound}.
\end{proof}
It is possible to derive a similar risk bound for the empirical Rademacher complexity by applying \autoref{th:chrom_tail_bound} to the difference of $\Rad_{\npt}(\FS\circ\Oracle_{\ntr}) - \Rad_{\X}(\FS\circ\Oracle_{\ntr})$. We omit this proof to save space, since the remainder of our work does not require the empirical Rademacher complexity.

% DATA DEPENDENT BOUNDS
To make these bounds more functional, we will replace $\Rad_{\npt}$ with an empirically verifiable quantity. For certain hypothesis classes, it is possible to show that the Rademacher complexity is bounded by a function of the model parameters.
One such class of hypotheses is \define{reproducing kernel Hilbert spaces} (RKHS), which subsume the popular support vector machine (SVM) \citep{cristianini:svms}. Formally, for some mapping $\Kmap : \ZS \to \tilde\ZS$, where $\ZS$ is an instance space and $\tilde\ZS$ is a Hilbert space, endowed with an inner product $\langle \cdot,\cdot \rangle$, a kernel $\K : \ZS^2 \to \Reals$ is a function such that, for all $(\z,\z') \in \ZS^2$, $\K(\z,\z') = \langle \Kmap(\z),\Kmap(\z') \rangle$. The only requirement is that the kernel's Gram matrix $\Kmat : \Kmat_{i,j} = \K(\z_i,\z_j)$ be symmetric and positive semidefinite. RKHS hypotheses are generally of the form $\hyp_{\K,\Data}(\z) \defeq \sum_{\z'\in\Data}^{\ntr} \K(\z,\z')$, where $\Data \in \ZS^{\ntr}$ is a set of reference points from the problem domain (e.g., support vectors). We may reasonably assume that the norm of the kernel mapping is uniformly bounded by a constant $\Knrm$; i.e., the mapped data is contained within a ball of radius $\Knrm$. We denote the class of kernel functions by $\HS_{\K}$. Borrowing results from \citet{koltchinskii:as02} and \citet{bartlett:jmlr03}, we are able to prove the following risk bounds for kernel-based hypotheses with bounded kernels, in the context of learning binary relations.

For the following, we use the \define{ramp loss} as a surrogate for the 0-1 loss. For a given $\conf > 0$, a real-valued hypothesis $\hyp : \XS^2 \to \Reals$ and an example $\z$, define the ramp loss as $\lossramp(\hyp,\z) \defeq \min\{ \max\{ 0, 1 - \rel(\z) \hyp(\z) / \conf \}, 1 \}$. To differentiate this from the 0-1 loss when dealing with risk metrics, we henceforth use a superscript $\1$ or $\conf$.
%
% rademacher risk bound for kernel hypotheses
\begin{theorem}
\label{th:rad_kernel_risk_bound}
Let $\X \in \XS^{\npt}$, $\Data \gets \Oracle_{\ntr}(\X)$ and $\rcnt$ be as defined in \autoref{th:rad_generic_bound}. Let $\hyp_{\K,\Data}$ be a RKHS hypothesis trained on $\Data$, such that $\sup_{\z} \norm{\Kmap(\z)} \leq \Knrm$. Then, for any $\npt \geq 2$, any $\ntr \geq 1$, any $\conf > 0$, and any $\fail \in (0,1)$, with probability at least $1 - \fail$ over draws of $\X$,
\begin{equation}
\label{eq:rad_kernel_risk_bound}
\Risk^{\1}(\hyp_{\K,\Data})
	\leq \Remp^{\conf}(\hyp_{\K,\Data}) + \frac{4\Knrm}{\conf\sqrt{\ntr}} + \sqrt{\frac{\rcnt+1}{2\ntr}\ln\frac{1}{\fail}}.
\end{equation}
\end{theorem}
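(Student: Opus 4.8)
The plan is to obtain \autoref{eq:rad_kernel_risk_bound} by instantiating the generic Rademacher bound of \autoref{th:rad_generic_bound} on the \emph{ramp-loss-composed} class, rather than on the hypotheses themselves, and then to estimate the resulting Rademacher term by standard, sample-wise arguments. The two facts that drive the reduction are that the ramp loss pointwise dominates the 0-1 loss and that it is $(1/\conf)$-Lipschitz in its hypothesis argument.

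First I would introduce the loss class $\FS_{\conf} \defeq \{ \z \mapsto \lossramp(\hyp_{\K,\Data},\z) : \hyp_{\K,\Data}\in\HS_{\K} \}$, every member of which takes values in $[0,1]$ by definition of the ramp, so that \autoref{th:rad_generic_bound} applies without modification. Identifying $\Ep_{\tilde\z}[\lossramp(\hyp_{\K,\Data},\tilde\z)] = \Risk^{\conf}(\hyp_{\K,\Data})$ and $\tfrac{1}{\ntr}\sum_{\z\in\Data}\lossramp(\hyp_{\K,\Data},\z) = \Remp^{\conf}(\hyp_{\K,\Data})$, the theorem gives, with probability at least $1-\fail$,
\[
\Risk^{\conf}(\hyp_{\K,\Data}) \leq \Remp^{\conf}(\hyp_{\K,\Data}) + \Rad_{\npt}(\FS_{\conf}\circ\Oracle_{\ntr}) + \sqrt{\tfrac{\rcnt+1}{2\ntr}\ln\tfrac{1}{\fail}}.
\]
This bound is uniform over the class, since the Rademacher term already supremizes over $\FS_{\conf}$; it therefore holds for the data-dependent trained hypothesis in particular. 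To replace $\Risk^{\conf}$ by $\Risk^{\1}$ on the left, I would observe that $\rel(\z)\hyp(\z)\leq 0$ forces $\lossramp(\hyp,\z)=1=\lossone(\hyp,\z)$, whereas $\lossramp\geq 0 = \lossone$ when the sign is correct; hence $\lossone\leq\lossramp$ pointwise and, in expectation, $\Risk^{\1}(\hyp_{\K,\Data})\leq\Risk^{\conf}(\hyp_{\K,\Data})$.

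It then remains to show $\Rad_{\npt}(\FS_{\conf}\circ\Oracle_{\ntr}) \leq \tfrac{4\Knrm}{\conf\sqrt{\ntr}}$. I would first peel off the loss with the contraction principle (as in \citealp{bartlett:jmlr03}): since the ramp is $(1/\conf)$-Lipschitz, with any additive constant harmless because $\Ep_{\radvars}[\radv_i]=0$, one gets $\Rad_{\npt}(\FS_{\conf}\circ\Oracle_{\ntr}) \leq \tfrac{2}{\conf}\,\Rad_{\npt}(\HS_{\K}\circ\Oracle_{\ntr})$. The kernel class is then controlled by the standard RKHS estimate: bounding the supremum over the unit ball by $\norm{\sum_i\radv_i\Kmap(\z_i)}$, applying Jensen's inequality, and using that the cross terms vanish gives $\Ep_{\radvars}\norm{\sum_i \radv_i \Kmap(\z_i)} \leq (\sum_i\norm{\Kmap(\z_i)}^2)^{1/2} \leq \Knrm\sqrt{\ntr}$, so that the $2/\ntr$ normalization of \autoref{def:rademacher} yields $\Rad_{\npt}(\HS_{\K}\circ\Oracle_{\ntr}) \leq \tfrac{2\Knrm}{\sqrt{\ntr}}$. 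Multiplying the two estimates gives the $\tfrac{4\Knrm}{\conf\sqrt{\ntr}}$ term, and substituting back produces \autoref{eq:rad_kernel_risk_bound}.

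The step that requires care — and the main obstacle to state cleanly — is that both the contraction inequality and the RKHS estimate are statements about a \emph{fixed} realization of the $\ntr$ training pairs $\{\z_i\}$: they involve only the independent signs $\radvars$ and the geometry of $\Kmap$, never the mutual (in)dependence of the examples. The pairwise dependence that is the focus of this paper thus does not re-enter at this stage, having already been absorbed into \autoref{th:rad_generic_bound} through the chromatic factor $\rcnt+1$. I would accordingly present both Rademacher estimates as holding pointwise in $\X$ and only afterward take the outer expectation $\Ep_{\X}$, so that no independence among the $\z_i$ is silently assumed.
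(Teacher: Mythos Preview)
Your proposal is correct and follows essentially the same route as the paper: apply \autoref{th:rad_generic_bound} to the ramp-loss class, use domination of the 0-1 loss by the ramp loss, peel off the loss via Talagrand's contraction to get the $2/\conf$ factor, and bound the kernel-class Rademacher complexity by $2\Knrm/\sqrt{\ntr}$. The only cosmetic difference is that the paper cites \citet[Lemma~22]{bartlett:jmlr03} and writes the kernel bound as $2\sqrt{\tr(\Kmat)}/\ntr$ before bounding $\tr(\Kmat)\leq \ntr\Knrm^2$, whereas you spell out the same computation directly; your closing remark that the contraction and RKHS estimates are pointwise in $\X$ (so the pairwise dependence does not re-enter) is a welcome clarification the paper leaves implicit.
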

\begin{proof}
Note that $\lossramp$ dominates $\lossone$, and thus $\Risk^{\1}(\hyp_{\K,\Data}) \leq \Risk^{\conf}(\hyp_{\K,\Data})$. Since the ramp loss is bounded by $[0,1]$, we apply \autoref{th:rad_generic_bound} and obtain
\begin{align*}
\Risk^{\1}(\hyp_{\Data})
	&\leq \Remp^{\conf}(\hyp_{\Data}) + \Rad_{\npt}(\lossramp\!\circ\!\HS_{\K}\!\circ\!\Oracle_{\ntr}) + \sqrt{\frac{\rcnt+1}{2\ntr}\ln\frac{1}{\fail}}.
\end{align*}
To bound $\Rad_{\npt}(\lossramp\circ\HS_{\K}\circ\Oracle_{\ntr})$, we use Talagrand's contraction lemma \citep{ledoux:probability91} and borrow a result from \citet[Lemma 22]{bartlett:jmlr03}, from which we obtain
\begin{equation*}
\Rad_{\npt}(\lossramp\circ\HS_{\K}\circ\Oracle_{\ntr})
	\leq \frac{2}{\conf} ~ \Rad_{\npt}(\HS_{\K}\circ\Oracle_{\ntr})
	\leq \frac{2}{\conf} \cdot \frac{2\sqrt{\tr(\Kmat)}}{\ntr}.
\end{equation*}
Using Cauchy-Schwarz, we can bound the trace of the kernel's Gram matrix as
\begin{equation*}
\tr(\Kmat)
%	= \sum_{\z\in\Data} \K(\z,\z)
	= \sum_{\z\in\Data} \langle \Kmap(\z), \Kmap(\z) \rangle
	\leq \sum_{\z\in\Data} \norm{\Kmap(\z)}^2
	\leq \ntr \Knrm^2.
\end{equation*}
We therefore have that $\Rad_{\npt}(\lossramp\circ\HS_{\K}\circ\Oracle_{\ntr}) \leq \frac{4\Knrm}{\conf\sqrt{\ntr}}$. %To complete the proof, we simply substitute this into the risk bound.
\end{proof}
%
%\begin{remark}
Note that this analysis slightly improves upon that of \citet{usunier:nips05} in that we use the regular Rademacher complexity instead of their so-called \define{fractional Rademacher complexity}. Because of this, our Rademacher term is $\bigO(1/\sqrt{\ntr})$, compared to $\bigO(\sqrt{\rcnt/\ntr})$ using the fractional Rademacher complexity; note that $1/\sqrt{\ntr} \leq \sqrt{\rcnt/\ntr}$ for all $\rho \geq 1$.
%\end{remark}

% STABILITY RISK BOUNDS
\subsection{Algorithmic Stability}
\label{sec:stability}

This section derives a different generalization bound for learning pairwise relations using our previous graph representations and \emph{algorithmic stability}.
%Informally, stability ensures that any change to the input of a learning algorithm will cause bounded variation in its output. Before proceeding, we review some necessary proof components.
%
% uniform stability
\begin{definition}[Uniform Stability]
\label{def:uniform_stability}
For a training set $\Data$, let $\Data'$ be a duplicate of $\Data$ with the $i\nth$ example removed. A learning algorithm $\Algo$ has \define{uniform stability} $\stab$ w.r.t.\ a loss function $\loss$ if, for any $\Data \in \ZS^{\ntr}$, and any $i \in \seq{\ntr}$, $\Algo$ returns hypotheses $\hyp_{\Data} \gets \Algo(\Data)$ and $\hyp_{\Data'} \gets \Algo(\Data')$ such that
\begin{equation*}
\label{eq:uniform_stability}
\sup_{\tilde\z \in \ZS} \ab{ \loss(\hyp_{\Data},\tilde\z) - \loss(\hyp_{\Data'},\tilde\z) } \leq \stab.
\end{equation*}
\end{definition}
In other words, excluding any single example from training will increase the loss, w.r.t.\ any test example $\tilde\z$, by at most $\stab$. Of course, $\stab$ must be a function of the size of the training set; indeed, we will later show that generalization is only possible when $\stab = \bigO(1/\ntr)$. To highlight this dependence, we henceforth use the notation $\stab_{\ntr}$. Using this notion of stability, we now derive alternate risk bounds for learning binary relations.
%
% defect lipschitz
\begin{lemma}[\citealp{bousquet:jmlr02}]
\label{lem:defect_lipschitz}
Let $\Algo$ be a learning algorithm with uniform stability $\stab$ w.r.t.\ a loss function $\loss$, where $\loss$ is upper bounded by $\costbound$. Then, for any $i \in \seq{\ntr}$, and any training sets $\Data,\Data' \in \ZS^{\ntr}$ that differ only in the value of the $i\nth$ example, $\Algo$ returns hypotheses $\hyp_{\Data} \gets \Algo(\Data)$ and $\hyp_{\Data'} \gets \Algo(\Data',\Graph)$ such that
\begin{equation*}
\label{eq:defect_lipschitz}
\ab{ \Defect(\hyp_{\Data}) - \Defect(\hyp_{\Data'}) } \leq 4\stab_{\ntr} + \frac{\costbound}{\ntr}.
\end{equation*}
\end{lemma}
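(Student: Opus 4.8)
The plan is to prove this as a purely deterministic sensitivity estimate, entirely separate from any probabilistic or graph-theoretic content: I would decompose $\Defect(\hyp) = \Risk(\hyp) - \Remp(\hyp)$, bound the change in the risk term and the change in the empirical term separately when a single training example is perturbed, and then combine them by the triangle inequality. It is worth stressing up front that independence, the dependency graph, and the chromatic number play no role here; this lemma merely supplies the Lipschitz constant of $\Defect(\cdot)$ under a one-example change, which is exactly the ingredient later fed into \autoref{th:mcdiarmid} (or the chromatic variant).

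The main obstacle is a mismatch between the two notions at play: uniform stability in \autoref{def:uniform_stability} is stated for \emph{removing} an example, whereas the lemma perturbs $\Data$ into $\Data'$ by \emph{replacing} the value of the $i\nth$ example. I would bridge this by passing through the intermediate set $\Data^{\setminus i}$ obtained by deleting the $i\nth$ example; since $\Data$ and $\Data'$ agree at every other position, deleting position $i$ from either one yields the same $\Data^{\setminus i}$. For any fixed $\tilde\z$, two applications of removal-stability then give
$$\ab{\loss(\hyp_\Data,\tilde\z) - \loss(\hyp_{\Data'},\tilde\z)} \leq \ab{\loss(\hyp_\Data,\tilde\z) - \loss(\hyp_{\Data^{\setminus i}},\tilde\z)} + \ab{\loss(\hyp_{\Data^{\setminus i}},\tilde\z) - \loss(\hyp_{\Data'},\tilde\z)} \leq 2\stab_\ntr.$$
This ``a replacement costs at most $2\stab_\ntr$'' estimate, valid uniformly in $\tilde\z$, is the workhorse for both remaining terms and is the source of the factor of $4$ in the final bound.

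For the risk term I would write $\Risk(\hyp_\Data) - \Risk(\hyp_{\Data'}) = \Ep_{\tilde\z}[\loss(\hyp_\Data,\tilde\z) - \loss(\hyp_{\Data'},\tilde\z)]$, pull the absolute value inside the expectation, and apply the displayed bound pointwise to obtain $\ab{\Risk(\hyp_\Data) - \Risk(\hyp_{\Data'})} \leq 2\stab_\ntr$. For the empirical term I would expand $\Remp(\hyp_\Data) - \Remp(\hyp_{\Data'}) = \tfrac{1}{\ntr}\sum_{j}(\loss(\hyp_\Data,\z_j) - \loss(\hyp_{\Data'},\z_j'))$ and split the sum by the index $i$. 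For the $\ntr-1$ indices $j \neq i$ the example is common to both sets ($\z_j = \z_j'$), so each difference is bounded by $2\stab_\ntr$ via the estimate above evaluated at $\tilde\z = \z_j$; these contribute at most $\tfrac{\ntr-1}{\ntr}\,2\stab_\ntr \leq 2\stab_\ntr$. For the single swapped index $i$, stability does not apply, but both losses lie in $[0,\costbound]$, so that term contributes at most $\costbound/\ntr$. Hence $\ab{\Remp(\hyp_\Data) - \Remp(\hyp_{\Data'})} \leq 2\stab_\ntr + \costbound/\ntr$.

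Combining the two pieces through the triangle inequality on $\Defect = \Risk - \Remp$ then gives $\ab{\Defect(\hyp_\Data) - \Defect(\hyp_{\Data'})} \leq 4\stab_\ntr + \costbound/\ntr$, as claimed. The only point demanding care is the bookkeeping in the empirical sum: the generic shared examples are controlled by stability, while the one genuinely altered example is controlled only by the crude loss bound $\costbound$, and this asymmetry is precisely what produces the additive $\costbound/\ntr$ on top of the $4\stab_\ntr$ coming from the two replacement arguments.
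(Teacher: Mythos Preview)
Your argument is correct and is precisely the standard Bousquet--Elisseeff derivation: bridge removal-stability to replacement via $\Data^{\setminus i}$, bound the risk difference by $2\stab_\ntr$, split the empirical sum into the $\ntr-1$ shared indices (each contributing at most $2\stab_\ntr$) and the single swapped index (contributing at most $\costbound/\ntr$), then combine via the triangle inequality. The paper itself does not supply a proof of this lemma; it is stated with attribution to \citet{bousquet:jmlr02} and used as a black box in the proof of \autoref{th:stab_risk_bound}, so there is nothing further to compare against.
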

%
% stability risk bound
\begin{theorem}
\label{th:stab_risk_bound}
Let $\X \in \XS^{\npt}$ be an i.i.d.\ sample of $\npt$ instances, and $\Data \gets \Oracle_{\ntr}(\X)$ an arbitrary training set of size $\ntr$, with maximum instance frequency $\rcnt$. Let $\Algo$ be a learning algorithm with uniform stability $\stab$ w.r.t.\ a loss function $\loss$ (upper bounded by $\costbound$), and let $\hyp_{\Data} \gets \Algo(\Data)$. Then, for any $\npt \geq 2$, any $\ntr \geq 1$, and any $\fail \in (0,1)$, with probability at least $1 - \fail$ over draws of $\X$,
\begin{equation}
\label{eq:stab_risk_bound}
\Risk(\hyp_{\Data})
	\leq \Remp(\hyp_{\Data}) + 4\rcnt\stab_{\ntr} + (4\ntr\stab_{\ntr} + \costbound) \sqrt{\frac{\rcnt}{\ntr}\ln\frac{1}{\fail}}.
\end{equation}
\end{theorem}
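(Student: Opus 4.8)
The plan is to control the defect $\Defect(\hyp_\Data)$ by writing $\Defect(\hyp_\Data) = \big(\Defect(\hyp_\Data) - \Ep_\X[\Defect(\hyp_\Data)]\big) + \Ep_\X[\Defect(\hyp_\Data)]$, bounding the fluctuation term with a concentration inequality and the bias term $\Ep_\X[\Defect(\hyp_\Data)]$ directly via stability. The one idea that distinguishes this from the classical \citet{bousquet:jmlr02} argument is that the $\ntr$ training examples are dependent, so I cannot apply \autoref{th:mcdiarmid} to them. Instead I treat $\Defect(\hyp_\Data)$ as a function of the $\npt$ \emph{instances} $\X_1,\dots,\X_\npt$, which \emph{are} i.i.d., and I track how perturbing a single instance ripples through the (at most $\rcnt$) examples that contain it.

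For the fluctuation term I would apply \autoref{th:mcdiarmid} over $\X_1,\dots,\X_\npt$. Let $d_k$ be the degree of instance $k$ in $\Graph$, i.e.\ the number of training examples containing $\x_k$, so that $d_k \le \rcnt = \mxdeg(\Graph)$. Replacing $\x_k$ by a fresh value leaves the subsampling pattern fixed (the labeler is independent of instance values) but changes the values of exactly the $d_k$ examples containing instance $k$. Chaining \autoref{lem:defect_lipschitz} across these $d_k$ single-example changes via the triangle inequality yields a bounded-difference constant $\lconst_k \le d_k\,(4\stab_\ntr + \costbound/\ntr)$ for coordinate $k$. The crucial step is to \emph{not} crudely replace each $\lconst_k$ by $\rcnt\,(4\stab_\ntr+\costbound/\ntr)$---which would produce an unwanted $\sqrt{\npt}$---but to retain the degrees and invoke the handshake identity $\sum_{k=1}^\npt d_k = 2\ntr$. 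Since $d_k \le \rcnt$, this gives $\sum_k \lconst_k^2 \le (4\stab_\ntr + \costbound/\ntr)^2 \sum_k d_k^2 \le (4\stab_\ntr + \costbound/\ntr)^2\,2\rcnt\ntr$. Feeding this into \autoref{th:mcdiarmid}, setting the right-hand side equal to $\fail$, and solving for $\eps$ produces exactly $(4\ntr\stab_\ntr + \costbound)\sqrt{(\rcnt/\ntr)\ln(1/\fail)}$ after factoring out $\sqrt{\rcnt/\ntr}$ (the factor $2$ in McDiarmid cancels the $2$ in $2\rcnt\ntr$), matching the last term of \autoref{eq:stab_risk_bound}.

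For the bias term I would run a ghost-\emph{instance} renaming argument. Introduce two fresh i.i.d.\ instances $\tilde\x,\tilde\x'$ with $\tilde\z = (\tilde\x,\tilde\x')$, so that $\Ep_\X[\Risk(\hyp_\Data)] = \Ep[\loss(\hyp_\Data,\tilde\z)]$ and $\Ep_\X[\Remp(\hyp_\Data)] = \tfrac{1}{\ntr}\sum_{e}\Ep[\loss(\hyp_\Data,\z_e)]$. For each training example $e$ on instances $(a_e,b_e)$, swap $\x_{a_e}\leftrightarrow\tilde\x$ and $\x_{b_e}\leftrightarrow\tilde\x'$; since all instances are i.i.d., this measure-preserving swap lets me rewrite $\Ep[\loss(\hyp_\Data,\z_e)]$ as the loss of the hypothesis trained on the swapped dataset $\Data^{(e)}$, evaluated at the fresh pair $\tilde\z$. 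Now $\Data^{(e)}$ differs from $\Data$ only in the $d_{a_e}+d_{b_e}-1$ examples touching $a_e$ or $b_e$, so \autoref{def:uniform_stability}, applied in its replace-one form (which costs $2\stab_\ntr$ per changed example), bounds the per-example discrepancy by $2(d_{a_e}+d_{b_e}-1)\stab_\ntr \le 2(d_{a_e}+d_{b_e})\stab_\ntr$. Averaging over $e$ and using $\sum_e(d_{a_e}+d_{b_e}) = \sum_k d_k^2 \le 2\rcnt\ntr$ gives $\Ep_\X[\Defect(\hyp_\Data)] \le 4\rcnt\stab_\ntr$, the remaining term of \autoref{eq:stab_risk_bound}. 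Combining the two bounds with probability $1-\fail$ completes the proof.

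The main obstacle is the bias term, since that is where the dependency structure genuinely departs from the i.i.d.\ template: the classical renaming swaps a single training example for a ghost example, but here an example and its two endpoints are entangled with up to $2\rcnt$ other examples, so the renaming must be carried out on instances and its stability cost carefully amortized. In both steps the same mechanism is at work---perturbing one instance disturbs $d_k$ examples---and the handshake identity $\sum_k d_k = 2\ntr$ together with $d_k \le \rcnt$ is precisely what converts the naive $\npt$-dependence into the $\rcnt$-dependence of the stated bound.
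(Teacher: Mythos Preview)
Your proof is correct and matches the paper's architecture exactly on the concentration step: both apply \autoref{th:mcdiarmid} over the $\npt$ i.i.d.\ \emph{instances} rather than the $\ntr$ dependent examples, obtain per-instance Lipschitz constants $d_k(4\stab_\ntr+\costbound/\ntr)$ from \autoref{lem:defect_lipschitz}, and use $\sum_k d_k^2 \le \rcnt\sum_k d_k = 2\rcnt\ntr$ via the handshaking lemma to get the last term of \autoref{eq:stab_risk_bound}.

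The one substantive difference is the bias term. The paper \emph{removes} each $\z_i$ together with its neighborhood $\neigh(\z_i)$ in the dependency graph $\DGraph$, so that $\z_i$ becomes independent of the surviving training set $\Data'$ and $\Ep[\loss(\hyp_{\Data'},\tilde\z)-\loss(\hyp_{\Data'},\z_i)]=0$ by symmetry; it pays $\stab_\ntr$ per removed example on two predictions, i.e.\ $2\stab_\ntr(\deg(\z_i)+1)\le 2\stab_\ntr(2\rcnt-1)\le 4\rcnt\stab_\ntr$. You instead \emph{swap} the two endpoint instances of $\z_e$ with fresh ghost instances, a measure-preserving map that rewrites $\Ep[\loss(\hyp_\Data,\z_e)]$ as $\Ep[\loss(\hyp_{\Data^{(e)}},\tilde\z)]$, and pay $2\stab_\ntr$ per \emph{replaced} example; averaging with $\sum_e(d_{a_e}+d_{b_e})=\sum_k d_k^2\le 2\rcnt\ntr$ again yields $4\rcnt\stab_\ntr$. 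Both routes are valid: the paper's removal argument is slightly leaner (only $\stab_\ntr$ per example and a direct per-term bound via $\deg(\z_i)+1\le 2\rcnt-1$), while your swap argument stays closer to the classical Bousquet--Elisseeff renaming template and pleasingly reuses the same degree-sum identity that drove the concentration step.
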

\begin{proof}
We begin by showing that the defect satisfies the conditions of McDiarmid's inequality (\autoref{th:mcdiarmid}). By \autoref{lem:defect_lipschitz}, replacing any single \emph{example} will change the defect by at most $4\stab_{\ntr} + \costbound/\ntr$. However, if we replace any single \emph{instance} $\x_i$, this will affect up to $\rcnt_i$ examples, where $\rcnt_i$ denotes the frequency of $\x_i$ in the training set. We therefore have that replacing any $\x_i$ has Lipschitz constant $\lconst_i = \rcnt_i (4\stab_{\ntr} + \costbound/\ntr)$. Since the instances are i.i.d., we can apply McDiarmid's inequality and obtain
\begin{align*}
\label{eq:fail_prob_stab_defect}
\lefteqn{\Pr_{\X}\left[ \Defect(\hyp_{\Data}) - \Ep_{\X}[\Defect(\hyp_{\Data})] \right]} ~~~~~~~~ \\
	&\leq \exp\( \frac{-2\eps^2}{\sum_{i=1}^{\npt} \rcnt_i^2 (4\stab_{\ntr} + \costbound/\ntr)^2} \) \\
%	&= \exp\( \frac{-2\eps^2\ntr^2}{(4\ntr\stab_{\ntr} + \costbound)^2 \sum_{i=1}^{\npt} \rcnt_i^2} \) \\
	&\leq \exp\( \frac{-2\eps^2\ntr^2}{(4\ntr\stab_{\ntr} + \costbound)^2 \rcnt \sum_{i=1}^{\npt} \rcnt_i} \) \\
	&= \exp\( \frac{-\eps^2\ntr}{(4\ntr\stab_{\ntr} + \costbound)^2 \rcnt} \).
\end{align*}
The last line is due to the \define{handshaking lemma}, which states that the sum of the degrees (i.e., instance frequencies) in a graph is equal to twice the number of edges (i.e., examples). Setting the above equal to $\fail$ and solving for $\eps$, we have that
\begin{equation*}
\Risk(\hyp_{\Data})
	\leq \Remp(\hyp_{\Data}) + \Ep_{\X}[\Defect(\hyp_{\Data})] + (4\ntr\stab_{\ntr} + \costbound) \sqrt{\frac{\rcnt}{\ntr}\ln\frac{1}{\fail}}.
\end{equation*}

What remains is to upper bound the expected defect. Using linearity of expectation, we can state this as
\begin{equation*}
\Ep_{\X}[ \Defect(\hyp_{\Data}) ]
	= \frac{1}{\ntr} \sum_{i=1}^{\ntr} \Ep_{\X,\tilde\z}[ \loss(\hyp_{\Data},\tilde\z) - \loss(\hyp_{\Data},\z_i) ].
\end{equation*}
Note that example $\z_i = (\x,\x')$ depends on any examples that share either $\x$ or $\x'$, i.e., its neighborhood $\neigh(\z_i)$ in the dependency graph. However, by removing $\z_i$ and $\neigh(\z_i)$ from the training set, $\z_i$ becomes independent of any of the remaining examples. Accordingly, let $\Data' \defeq \Data \setminus \{ \z_i, \neigh(\z_i) \}$, and let $\hyp_{\Data'}$ be the resulting hypothesis. Because we have removed $\deg(\z_i) + 1$ examples from training, by uniform stability, we pay a penalty of at most $\stab_{\ntr}(\deg(\z_i) + 1)$ loss per prediction. Further, recall that $\DGraph$ is the line graph of the graph $\Graph$ defined in \autoref{sec:training_data}. It is therefore easy to show that any node in $\DGraph$ (i.e., edge in $\Graph$) has degree equal to the sum of the degrees of its endpoints in $\Graph$, minus two; hence, $\deg(\z_i) + 1 = \deg(x) + \deg(x') - 1 \leq 2\rcnt -1$. We therefore have that
\begin{align*}
\Ep_{\X}[ \Defect(\hyp_{\Data}) ]
	&\leq \frac{1}{\ntr} \sum_{i=1}^{\ntr} \Ep_{\X,\tilde\z}[ \loss(\hyp_{\Data'},\tilde\z) - \loss(\hyp_{\Data'},\z_i) ] \\
	&~~~~ + 2\stab_{\ntr}(\deg(\z_i) + 1) \\
	&= \frac{2\stab_{\ntr}}{\ntr} \sum_{i=1}^{\ntr} 0 + \deg(\z_i) + 1 \\
	&\leq 2 \stab_{\ntr} (2\rcnt-1)
	\leq 4 \rcnt \stab_{\ntr},
\end{align*}
where the second line follows from symmetry, since $\z$ and $\z_i$ are now i.i.d.\ variables.
\end{proof}
%
% ramp loss
To obtain a non-vacuous bound, we require that $\stab_{\ntr} = \bigO(1/\ntr)$. This precludes stability w.r.t.\ the 0-1 loss $\lossone$, since any algorithm will have either $\stab = 0$ or $\stab = 1$, regardless of $\ntr$. We therefore use the ramp loss $\lossramp$ again. To use the ramp loss, we introduce the notion of \define{classification stability}. For the following, we consider learning algorithms that output a real-valued hypothesis $\hyp : \ZS \to \Reals$, where $\sgn(\hyp(\z))$ is the predicted label of $\z$.
%
% classification stability
\begin{definition}[Classification Stability]
\label{def:class_stability}
Let $\Data$ and $\Data'$ be as defined in \autoref{def:uniform_stability}. A learning algorithm $\Algo$ has \define{classification stability} $\stab$ if, for any $\Data \in \ZS^{\ntr}$, and any $i \in \seq{\ntr}$, $\Algo$ returns real-valued hypotheses $\hyp_{\Data} \gets \Algo(\Data)$ and $\hyp_{\Data'} \gets \Algo(\Data')$ such that
\begin{equation*}
\label{eq:class_stability}
\sup_{\tilde\z \in \ZS} \ab{ \hyp_{\Data}(\tilde\z) - \hyp_{\Data'}(\tilde\z) } \leq \stab.
\end{equation*}
\end{definition}
%
% classification stability implies uniform stability w.r.t. ramp loss
\begin{lemma}[\citealp{bousquet:jmlr02}]
\label{lem:ramp_loss_stability}
A learning algorithm with classification stability $\stab$ has uniform stability $\stab / \conf$ w.r.t.\ the ramp loss $\lossramp$.
\end{lemma}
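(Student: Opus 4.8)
The plan is to exploit the fact that the ramp loss is a Lipschitz function of the signed margin $\rel(\z)\hyp(\z)$, and then transfer the pointwise closeness of the two hypotheses guaranteed by classification stability directly into pointwise closeness of their losses. Concretely, to establish uniform stability $\stab/\conf$ in the sense of \autoref{def:uniform_stability}, I must show that $\sup_{\tilde\z\in\ZS}\ab{\lossramp(\hyp_{\Data},\tilde\z)-\lossramp(\hyp_{\Data'},\tilde\z)}\leq\stab/\conf$, where $\Data'$ is $\Data$ with a single example removed.

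First I would write $\lossramp(\hyp,\z)=\phi(\rel(\z)\hyp(\z))$, where $\phi(t)\defeq\min\{\max\{0,1-t/\conf\},1\}$ is the univariate ramp applied to the margin. The key observation is that $\phi$ is $(1/\conf)$-Lipschitz: the affine piece $1-t/\conf$ has slope $-1/\conf$, and composing it with the clipping operations $\max\{0,\cdot\}$ and $\min\{\cdot,1\}$, each of which is nonexpansive, cannot increase the Lipschitz constant. Hence $\ab{\phi(s)-\phi(t)}\leq\conf^{-1}\ab{s-t}$ for all $s,t$.

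Next I would apply this bound at a fixed $\tilde\z$ with $s=\rel(\tilde\z)\hyp_{\Data}(\tilde\z)$ and $t=\rel(\tilde\z)\hyp_{\Data'}(\tilde\z)$, giving $\ab{\lossramp(\hyp_{\Data},\tilde\z)-\lossramp(\hyp_{\Data'},\tilde\z)}\leq\conf^{-1}\ab{\rel(\tilde\z)}\,\ab{\hyp_{\Data}(\tilde\z)-\hyp_{\Data'}(\tilde\z)}$. Because $\rel(\tilde\z)\in\TwoClass$, we have $\ab{\rel(\tilde\z)}=1$, so the label factor drops out and the right-hand side reduces to $\conf^{-1}\ab{\hyp_{\Data}(\tilde\z)-\hyp_{\Data'}(\tilde\z)}$. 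Taking the supremum over $\tilde\z$ and invoking classification stability (\autoref{def:class_stability}), namely $\sup_{\tilde\z}\ab{\hyp_{\Data}(\tilde\z)-\hyp_{\Data'}(\tilde\z)}\leq\stab$, yields the claimed bound $\stab/\conf$.

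There is no deep obstacle here; the only point demanding care is the Lipschitz-constant bookkeeping for the clipped ramp $\phi$---verifying that the two truncations do not inflate the effective slope beyond $1/\conf$---together with the trivial but essential cancellation $\ab{\rel(\tilde\z)}=1$ that removes the label from the margin. Everything else is a direct substitution into the two stability definitions.
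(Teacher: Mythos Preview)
Your argument is correct and is exactly the standard proof: the ramp $\phi(t)=\min\{\max\{0,1-t/\conf\},1\}$ is $(1/\conf)$-Lipschitz, so classification stability $\stab$ on $\hyp$ transfers to uniform stability $\stab/\conf$ on $\lossramp$. The paper does not supply its own proof of this lemma---it is stated as a citation to \citet{bousquet:jmlr02}---and your derivation matches the argument given there.
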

%
% stability risk bound for ramp loss
\begin{theorem}
\label{th:stab_ramp_risk_bound}
Let $\X \in \XS^{\npt}$, $\Data \gets \Oracle_{\ntr}(\X)$ and $\rcnt$ be as defined in \autoref{th:stab_risk_bound}. Let $\Algo$ be a learning algorithm with classification stability $\stab$, and let $\hyp_{\Data} \gets \Algo(\Data)$. Then, for any $\npt \geq 2$, any $\ntr \geq 1$, any $\conf > 0$, and any $\fail \in (0,1)$, with probability at least $1 - \fail$ over draws of $\X$,
\begin{equation}
\label{eq:stab_ramp_risk_bound}
\Risk^{\1}(\hyp_{\Data})
	\leq \Remp^{\conf}(\hyp_{\Data}) + \frac{4\rcnt\stab_{\ntr}}{\conf} + \( \frac{4\ntr\stab_{\ntr}}{\conf} + 1 \) \sqrt{\frac{\rcnt}{\ntr}\ln\frac{1}{\fail}}.
\end{equation}
\end{theorem}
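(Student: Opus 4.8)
The plan is to reduce this statement directly to the general stability bound of \autoref{th:stab_risk_bound}, instantiating that theorem with the ramp loss $\lossramp$. The only genuinely new ingredient relative to \autoref{th:stab_risk_bound} is the passage from \emph{classification} stability to \emph{uniform} stability, which is supplied by \autoref{lem:ramp_loss_stability}, combined with the standard observation that the ramp loss upper bounds the 0-1 loss.

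First I would record the two properties of $\lossramp$ that make it compatible with \autoref{th:stab_risk_bound}. Since $\lossramp$ dominates $\lossone$ pointwise, we have $\Risk^{\1}(\hyp_{\Data}) \leq \Risk^{\conf}(\hyp_{\Data})$; this lets us establish the bound for the (larger) ramp risk and then inherit it for the 0-1 risk appearing on the left-hand side. Moreover, $\lossramp$ takes values in $[0,1]$, so it meets the boundedness hypothesis of \autoref{th:stab_risk_bound} with $\costbound = 1$.

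Next I would invoke \autoref{lem:ramp_loss_stability}: because $\Algo$ has classification stability $\stab$, it has uniform stability $\stab_{\ntr}/\conf$ with respect to $\lossramp$. I would then apply \autoref{th:stab_risk_bound} to the ramp loss, substituting the uniform stability $\stab_{\ntr}/\conf$ in place of $\stab_{\ntr}$ and setting $\costbound = 1$. The term $4\rcnt\stab_{\ntr}$ becomes $4\rcnt\stab_{\ntr}/\conf$, and the prefactor $(4\ntr\stab_{\ntr} + \costbound)$ becomes $(4\ntr\stab_{\ntr}/\conf + 1)$, yielding a high-probability bound on $\Risk^{\conf}(\hyp_{\Data})$ that matches the right-hand side of \autoref{eq:stab_ramp_risk_bound}. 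Chaining this with $\Risk^{\1}(\hyp_{\Data}) \leq \Risk^{\conf}(\hyp_{\Data})$ completes the argument.

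I do not anticipate a substantial obstacle: the heavy lifting---the dependency-graph coloring, the removal of a vertex's neighborhood to restore independence, and the McDiarmid application---was already carried out in the proof of \autoref{th:stab_risk_bound}. The only care required is bookkeeping, namely verifying that $\lossramp$ genuinely satisfies the boundedness and uniform-stability hypotheses of that theorem, so that the substitution $\stab_{\ntr} \mapsto \stab_{\ntr}/\conf$ and $\costbound \mapsto 1$ is legitimate and reproduces exactly the stated constants.
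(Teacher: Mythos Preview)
Your proposal is correct and mirrors the paper's own proof, which simply cites \autoref{th:stab_risk_bound} and \autoref{lem:ramp_loss_stability} together with $\costbound = 1$. Your write-up just makes explicit the two bookkeeping steps (ramp dominates 0-1, and the substitution $\stab_{\ntr}\mapsto\stab_{\ntr}/\conf$) that the paper leaves implicit.
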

\begin{proof}
The proof follows directly from \autoref{th:stab_risk_bound} and \autoref{lem:ramp_loss_stability}, with the ramp loss obviously upper bounded by $\costbound = 1$.
\end{proof}
%
% SVM classification
The application of this bound still depends on a stability parameter, which is unique to the learning algorithm. As a demonstration, we will focus on the class of kernel methods described in \autoref{sec:rademacher}; specifically, SVM classification. Recall that, using stability analysis, generalization is made possible by properties of the learning algorithm, not the complexity of the hypothesis class. For the class of kernel methods in particular, this mechanism is \define{regularization}. We define a kernel-based regularization algorithm as one of the form
\begin{equation*}
\label{eq:kernel_algo}
\Algo_{\K}(\Data) \defeq \argmin_{\hyp\in\HS_{\K}} \frac{1}{\ntr} \sum_{\z\in\Data} \loss(\hyp,\z) + \reg\norm{\hyp}^2,
\end{equation*}
where $\reg > 0$ is a regularization parameter. The loss function varies, depending on the application and algorithm. In SVM classification, it is common to minimize the \define{hinge loss}, defined as $\losshinge(\hyp,\z) \defeq \max\{ 0, 1 - \rel(\z) \hyp(\z) \}$. Denote the empirical hinge risk by $\Remp^{\hinge}$. Using a stability result from \citet{bousquet:jmlr02}, we obtain a risk bound for SVM classification.
%
% classification stability of SVM
\begin{lemma}[\citealp{bousquet:jmlr02}]
\label{lem:kernel_class_stability}
An SVM learning algorithm, with $\sup_{\z} \norm{\Kmap(\z)} \leq \Knrm$ and regularization parameter $\reg > 0$, has classification stability $\stab_{\ntr} \leq \Knrm^2 / (2\reg\ntr)$.
\end{lemma}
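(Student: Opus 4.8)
The plan is to exploit the strong convexity that the regularizer $\reg\norm{\hyp}^2$ injects into the SVM objective, and to show that deleting one training example perturbs the minimizer by only $\bigO(1/\ntr)$ in RKHS norm. Write $G(\hyp) \defeq \tfrac{1}{\ntr}\sum_{\z\in\Data}\losshinge(\hyp,\z) + \reg\norm{\hyp}^2$ and $H(\hyp) \defeq \tfrac{1}{\ntr}\sum_{\z\in\Data'}\losshinge(\hyp,\z) + \reg\norm{\hyp}^2$ for the objectives minimized by $\hyp_{\Data}$ and $\hyp_{\Data'}$ respectively, where $\Data'$ is $\Data$ with the $i\nth$ example $\z_i$ removed and we retain the $\tfrac{1}{\ntr}$ normalization in both. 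Since the hinge loss is convex and $\reg\norm{\cdot}^2$ is $2\reg$-strongly convex, both $G$ and $H$ are $2\reg$-strongly convex.

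First I would invoke the defining property of strong convexity at a minimizer: for a $2\reg$-strongly convex function, the gap between its value at any point and its value at the minimizer dominates $\reg$ times the squared distance between the two points. No differentiability is needed, so the kink in the hinge loss is harmless. Applying this to $G$ (minimized at $\hyp_{\Data}$) evaluated at $\hyp_{\Data'}$, and to $H$ (minimized at $\hyp_{\Data'}$) evaluated at $\hyp_{\Data}$, then summing the two inequalities, the left-hand side collapses: the regularizers and every shared loss term cancel, leaving only the term for $\z_i$. This gives
\[
2\reg\norm{\hyp_{\Data} - \hyp_{\Data'}}^2 \leq \tfrac{1}{\ntr}\ab{ \losshinge(\hyp_{\Data'},\z_i) - \losshinge(\hyp_{\Data},\z_i) }.
\]

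Next I would bound this single loss difference. The hinge loss is $1$-Lipschitz in its margin argument $\rel(\z)\hyp(\z)$, and since $\ab{\rel(\z)} = 1$ it is $1$-Lipschitz in $\hyp(\z_i)$; hence the right-hand side above is at most $\tfrac{1}{\ntr}\ab{\hyp_{\Data'}(\z_i) - \hyp_{\Data}(\z_i)}$. Using the reproducing property $\hyp(\z_i) = \langle \hyp, \Kmap(\z_i) \rangle$, Cauchy-Schwarz, and the kernel bound $\norm{\Kmap(\z_i)} \leq \Knrm$, this is in turn at most $\tfrac{\Knrm}{\ntr}\norm{\hyp_{\Data'} - \hyp_{\Data}}$. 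Substituting and dividing through by $\norm{\hyp_{\Data} - \hyp_{\Data'}}$ yields $\norm{\hyp_{\Data} - \hyp_{\Data'}} \leq \Knrm/(2\reg\ntr)$.

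Finally, to turn this norm bound into classification stability I would apply the reproducing property once more at an arbitrary test point $\tilde\z$, obtaining $\ab{\hyp_{\Data}(\tilde\z) - \hyp_{\Data'}(\tilde\z)} = \ab{\langle \hyp_{\Data} - \hyp_{\Data'}, \Kmap(\tilde\z) \rangle} \leq \Knrm\norm{\hyp_{\Data} - \hyp_{\Data'}} \leq \Knrm^2/(2\reg\ntr)$, uniformly in $\tilde\z$, which is precisely the claimed bound. I expect the main obstacle to be the second step: recognizing that the sum of the two strong-convexity gaps telescopes down to a single example's loss and then pairs cleanly with the $2\reg$ factor to produce $1/(2\reg\ntr)$; the Lipschitz and reproducing-property manipulations are routine. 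A secondary point of care is the normalization convention — keeping $\tfrac{1}{\ntr}$ (rather than $\tfrac{1}{\ntr-1}$) on the reduced objective $H$ — since this is exactly what leaves a single $\tfrac{1}{\ntr}$-weighted term and delivers the stated constant.
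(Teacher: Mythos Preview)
Your argument is correct and is essentially the standard Bousquet--Elisseeff derivation: strong convexity of the regularized objective, telescoping the two first-order/strong-convexity inequalities to isolate the single removed term, then the $1$-Lipschitz property of the hinge loss together with the reproducing property and Cauchy--Schwarz. The telescoping step and the normalization remark are handled correctly, and the constants line up to give $\Knrm^2/(2\reg\ntr)$.

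Note, however, that the paper itself does not supply a proof of this lemma at all: it is stated as a cited result from \citet{bousquet:jmlr02} and used as a black box in the proof of \autoref{th:stab_svm_risk_bound}. So there is no ``paper's own proof'' to compare against; what you have written is a faithful reconstruction of the original source's argument, which is more than the present paper provides.
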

%
% stability risk bound for SVM classification
\begin{theorem}
\label{th:stab_svm_risk_bound}
Let $\X \in \XS^{\npt}$, $\Data \gets \Oracle_{\ntr}(\X)$ and $\rcnt$ be as defined in \autoref{th:stab_risk_bound}. Let $\Algo_{\K}$ be an SVM learning algorithm, with $\sup_{\z} \norm{\Kmap(\z)} \leq \Knrm$ and $\reg > 0$, and let $\hyp_{\K,\Data} \gets \Algo_{\K}(\Data)$. Then, for any $\npt \geq 2$, any $\ntr \geq 1$, and any $\fail \in (0,1)$, with probability at least $1 - \fail$ over draws of $\X$,
\begin{equation}
\label{eq:stab_svm_risk_bound}
\Risk^{\1}(\hyp_{\K,\Data})
	\leq \Remp^{\hinge}(\hyp_{\K,\Data}) + \frac{2\rcnt\Knrm^2}{\reg\ntr} + \( \frac{2\Knrm^2}{\reg} + 1 \) \sqrt{\frac{\rcnt}{\ntr}\ln\frac{1}{\fail}}.
\end{equation}
\end{theorem}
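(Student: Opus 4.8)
The plan is to specialize the general stability bound of \autoref{th:stab_ramp_risk_bound} to the SVM algorithm $\Algo_{\K}$, whose classification stability is controlled by \autoref{lem:kernel_class_stability}. Since $\Algo_{\K}$ has classification stability $\stab_{\ntr}$, \autoref{th:stab_ramp_risk_bound} applies directly, yielding, for every margin $\conf > 0$,
\begin{equation*}
\Risk^{\1}(\hyp_{\K,\Data}) \leq \Remp^{\conf}(\hyp_{\K,\Data}) + \frac{4\rcnt\stab_{\ntr}}{\conf} + \( \frac{4\ntr\stab_{\ntr}}{\conf} + 1 \) \sqrt{\frac{\rcnt}{\ntr}\ln\frac{1}{\fail}}.
\end{equation*}
The remainder of the proof is then a matter of choosing $\conf$ correctly and substituting the SVM stability estimate.

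First I would fix the margin at $\conf = 1$, so that the surrogate risk on the right aligns with the unit margin built into the hinge loss. At this margin the ramp loss is exactly the hinge loss truncated above by $1$, i.e.\ $\lossramp(\hyp,\z) = \min\{ \max\{ 0, 1 - \rel(\z)\hyp(\z) \}, 1 \} \leq \max\{ 0, 1 - \rel(\z)\hyp(\z) \} = \losshinge(\hyp,\z)$ for every $\hyp$ and $\z$. Averaging over the training set gives $\Remp^{\conf}(\hyp_{\K,\Data}) \leq \Remp^{\hinge}(\hyp_{\K,\Data})$ when $\conf = 1$, which lets me replace the ramp empirical risk by the hinge empirical risk that the SVM objective actually minimizes.

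The final step is purely arithmetic: I would substitute $\stab_{\ntr} \leq \Knrm^2 / (2\reg\ntr)$ from \autoref{lem:kernel_class_stability} into the two remaining occurrences of $\stab_{\ntr}$, which turns $4\rcnt\stab_{\ntr}$ into $2\rcnt\Knrm^2/(\reg\ntr)$ and $4\ntr\stab_{\ntr}$ into $2\Knrm^2/\reg$, reproducing exactly the three terms of \autoref{eq:stab_svm_risk_bound}. There is no genuine difficulty to overcome, as the statement is a direct composition of two earlier results; the only point demanding care is the choice of margin, since one must take $\conf = 1$ (rather than leave it free) for the truncated-hinge surrogate to be dominated by, and hence upper bounded by, the SVM hinge loss.
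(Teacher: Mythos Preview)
Your proposal is correct and mirrors the paper's own proof almost verbatim: apply \autoref{th:stab_ramp_risk_bound} with $\conf=1$, use that the hinge loss dominates the ramp loss at unit margin so $\Remp^{\conf}\leq\Remp^{\hinge}$, and substitute $\stab_{\ntr}=\Knrm^2/(2\reg\ntr)$ from \autoref{lem:kernel_class_stability}. The paper states this in one line; you have simply fleshed out the arithmetic and the domination inequality.
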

\begin{proof}
Clearly, for $\conf=1$, $\Remp^{\hinge}$ dominates $\Remp^{\conf}$. We therefore apply \autoref{th:stab_ramp_risk_bound}, with $\conf=1$ and $\stab = \Knrm^2 / (2\reg\ntr)$.
\end{proof}

% DISCUSSION OF BOUNDS
\subsection{Discussion of Bounds}
\label{sec:discussion_of_bounds}
Our bounds are dominated by the term $\rcnt / \ntr$, where $\rcnt$ is the maximum instance frequency (equivalently, the maximum degree in $\Graph$) and $\ntr$ is the number of examples (edges). We refer to the inverse of this ratio as the \define{effective training size}. Letting $\rcnt_i$ denote the frequency (i.e., degree) of instance $\x_i$, we have that
\begin{equation*}
\frac{\rcnt}{\ntr} = \frac{2\rcnt}{\sum_i^{\npt} \rcnt_i} = \frac{2}{\sum_i^{\npt} \rcnt_i / \rcnt},
\end{equation*}
It is straightforward to show that this quantity is minimized when $\Graph$ is regular---that is, when $\rcnt_i$ is uniform. In fact, for any regular $\Graph$, we have that $\rcnt / \ntr = 2 / \npt$.

Assuming one cannot acquire new examples, one can discard examples to make a regular graph, which gives the optimal ratio. This is equivalent to finding a $k$-regular spanning subgraph (i.e., $k$-factor), for some $k \geq 1$. This is not always possible without reducing the number of instances (i.e., vertices), as some graphs do not admit such a $k$-factor. For example, if the highest degree vertex is adjacent to multiple degree-1 vertices (as in a star graph), then certain vertices will be ``isolated" when edges are removed. In fact, the \emph{effective} number of instances $\npt'$ is the order of the largest $k$-regular induced subgraph, for some $k \geq 1$; we therefore have that the effective training size is upper bounded by $\npt' / 2$. That said, identifying $\npt'$ for an arbitrary graph is NP-hard.

It is tempting to think that, by discarding examples to induce a 1-regular subgraph, one can reduce our learning setup to the i.i.d.\ scenario and consequently apply classical analysis. 
However, there may be a regular (sub)graph of higher degree that yields a better effective training size. For instance, consider a graph consisting of $t$ disjoint triangles (i.e., $\npt = 3t$); this graph is already 2-regular, so without pruning edges it has an effective training size $\npt / 2 = 3t / 2$; if pruned to a 1-regular subgraph, the effective training size would be just $t$. Moreover, while the above shows that discarding examples might minimize our bounds, without intimate knowledge of the learning algorithm, hypothesis class or distribution, our bounds may be overly pessimistic in certain scenarios. Stronger assumptions may lead to tighter risk bounds, to support the intuition that more training data---albeit dependent data---will always improve generalization.

%%% SUBSAMPLING AND CONVERGENCE RATE
\section{On Subsampling and the Rate of Uniform Convergence}
\label{sec:subsampling}

We have shown that the empirical risk converges to the true risk at a rate of $\bigO(\sqrt{\rcnt / \ntr})$, depending primarily on the size of the training set and the maximum frequency of any instance. While $\ntr$ may be determined by one's annotation or computation budget, $\rcnt$ depends on the subsampling used to select the training set. In this section, we examine the relationship between the subsampling process used by the labeler and the rate of uniform convergence.

% labeler with intent
Recall that the labeler \emph{cannot} subsample based on the values of the input data, but it can subsample patterns that help or hurt generalization. If the labeler is working \emph{against} the learner, it can select pairs such that one instance appears in all training examples, meaning $\rcnt = \ntr$. In this scenario, our bounds indicate that a hypothesis learned from this training set might not generalize. In contrast, if the labeler is working \emph{with} the learner, it can subsample pairs so as to induce a regular label graph, as discussed in the previous section. This would yield an optimal convergence rate of $\bigO(1 / \sqrt{\npt})$, comparable to classical results.

% random subsampling
We may also consider a setting in which the subsampling is a random process. For example, if the labeler selects pairs uniformly at random without replacement, then, as previously noted, this process can be modeled by the Erd\"{o}s-R\'{e}nyi random graph model. We then have that the rate of convergence is a function of the maximum degree of $\ERGraph(\npt,\ntr)$, since this is equivalent to the maximum instance frequency.
%
% max degree of E-R model
\begin{lemma}
\label{lem:mxdeg_erm}
Let $\Graph \defeq (\Nodes,\Edges)$ be a graph in $\ERGraph(\npt,\ntr)$, for a given $\npt$ and $\ntr$. Then, with probability at least $1 - \fail$, its maximum degree $\mxdeg(\Graph)$ is upper bounded as
\begin{equation}
\label{eq:mxdeg_erm}
\mxdeg(\Graph) \leq \frac{2\ntr}{\npt} \(1 + \sqrt{ \frac{3\npt}{2\ntr} \ln\frac{\npt}{\fail} }\).
\end{equation}
\end{lemma}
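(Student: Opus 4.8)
The plan is to bound the degree of a single vertex and then union-bound over all $\npt$ vertices. First I would identify the exact distribution of the degree. In $\ERGraph(\npt,\ntr)$ we draw $\ntr$ edges uniformly without replacement from the ${\npt \choose 2}$ candidate edges; of these, exactly $\npt-1$ are incident to any fixed vertex $\node_i$. Hence $\deg(\node_i)$ is hypergeometric, and a direct computation gives the expected degree $\Ep[\deg(\node_i)] = \ntr \cdot \tfrac{\npt-1}{{\npt \choose 2}} = \tfrac{2\ntr}{\npt}$, which is precisely the leading factor $2\ntr/\npt$ appearing in \autoref{eq:mxdeg_erm}. This is the structural observation that makes the bound clean: the claimed form is simply ``mean times $(1+\delta)$'' for an appropriate deviation $\delta$.

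Next I would apply a multiplicative Chernoff-type upper-tail bound to $\deg(\node_i)$. Writing $\deg(\node_i)$ as a sum of indicator variables (one per candidate edge incident to $\node_i$), the without-replacement sampling makes these indicators negatively associated, so the standard Chernoff upper tail applies exactly as in the independent case. Setting $\mu \defeq 2\ntr/\npt$, this yields $\Pr[\deg(\node_i) \geq (1+\delta)\mu] \leq \exp(-\delta^2\mu/3)$ for $0 < \delta \leq 1$. A union bound over the $\npt$ vertices then gives $\Pr[\mxdeg(\Graph) \geq (1+\delta)\mu] \leq \npt\exp(-\delta^2\mu/3)$.

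Finally I would invert the failure probability. Setting $\npt\exp(-\delta^2\mu/3) = \fail$ and solving for $\delta$ gives $\delta = \sqrt{\tfrac{3}{\mu}\ln\tfrac{\npt}{\fail}} = \sqrt{\tfrac{3\npt}{2\ntr}\ln\tfrac{\npt}{\fail}}$, and substituting back produces $(1+\delta)\mu = \tfrac{2\ntr}{\npt}\bigl(1 + \sqrt{\tfrac{3\npt}{2\ntr}\ln\tfrac{\npt}{\fail}}\bigr)$, which is exactly \autoref{eq:mxdeg_erm}. Thus with probability at least $1-\fail$ the maximum degree obeys the stated bound.

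The main (indeed essentially the only nontrivial) obstacle is justifying the Chernoff bound for the hypergeometric distribution, i.e.\ for sampling without replacement rather than with replacement. I would resolve this either via negative association of the edge-selection indicators, or equivalently via Hoeffding's classical observation that sampling without replacement is dominated in the convex order by sampling with replacement, so that all moment-generating-function--based tail bounds transfer directly from the binomial case. A secondary technicality is the regime restriction $\delta \leq 1$ required by the $\exp(-\delta^2\mu/3)$ form; in the parameter range of interest, where $\ntr$ is large relative to $\npt$ so that $\mu$ is large and the required deviation $\delta$ is correspondingly small, this form is the operative one, but for completeness one could note the alternative $\exp(-\delta\mu/3)$ tail that covers $\delta > 1$.
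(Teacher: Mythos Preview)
Your proposal is correct and follows essentially the same route as the paper: compute the expected degree $2\ntr/\npt$, invoke negative dependence of the edge indicators to justify a multiplicative Chernoff bound on each vertex's degree, union-bound over the $\npt$ vertices, and invert. Your treatment is in fact slightly more careful than the paper's, which simply asserts negative correlation and applies Chernoff without discussing the $\delta \leq 1$ regime or the hypergeometric/without-replacement justification.
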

We provide the proof in \autoref{sec:proof_mxdeg_erm}.

Using this as an upper bound for $\rcnt$ (since $\rcnt = \mxdeg(\Graph)$), we obtain the following corollary of \autoref{th:rad_kernel_risk_bound}.
% and \autoref{th:stab_svm_risk_bound}.
A similar result can be shown for \autoref{th:stab_svm_risk_bound}.
%
% random subsampling risk bound
\begin{theorem}
\label{th:rad_kernel_risk_bound_erm}
Let $\X \in \XS^{\npt}$, $\Data \gets \Oracle_{\ntr}(\X)$, $\hyp_{\K,\Data}$, $\Knrm$ and $\reg$ be as previously defined. If $\Oracle$ samples $\ntr$ examples uniformly at random from $\X^2$, then, for any $\npt \geq 2$, any $\ntr \geq \npt/2$, any $\conf > 0$, and any $\fail \in (0,1)$, with probability at least $1 - \fail$ over draws of $\X$, and $\ERMconst \defeq 1 + \sqrt{ \frac{3\npt}{2\ntr} \ln\frac{2\npt}{\fail} }$,
\begin{equation}
\label{eq:rad_kernel_risk_bound_erm}
\Risk^{\1}(\hyp_{\K,\Data})
	\leq \Remp^{\conf}(\hyp_{\K,\Data})
%	+ \frac{8\Knrm}{\conf\sqrt{\npt}}
	+ \frac{\sqrt{32}\Knrm}{\conf\sqrt{\npt}}
	+ \sqrt{\frac{\ERMconst+1}{\npt}\ln\frac{2}{\fail}}.
\end{equation}
%% REMOVED STABILITY BOUND FOR SPACE. If you put back in, make sure to add $\Algo_{\K}$ toe the theorem def and uncomment line above and in proof.
%and
%\begin{equation}
%\label{eq:stab_kernel_risk_bound_erm}
%\Risk^{\1}(\hyp_{\K,\Data})
%	\leq \Remp^{\hinge}(\hyp_{\K,\Data})
%	+ \frac{4\Knrm^2\ERMconst}{\reg\npt}
%	+ \( \frac{2\Knrm^2}{\reg} + 1 \) \sqrt{\frac{2\ERMconst}{\npt}\ln\frac{2}{\fail}}.
%\end{equation}
\end{theorem}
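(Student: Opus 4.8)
The plan is to treat this as a direct corollary of the generic kernel bound in \autoref{th:rad_kernel_risk_bound} together with the degree concentration of \autoref{lem:mxdeg_erm}, glued together by a union bound. The essential observation is that, under the $\ERGraph(\npt,\ntr)$ labeler, there are two independent sources of randomness: the i.i.d.\ draw of the instances $\X$, and the labeler's random choice of subsampling pattern (equivalently, the random graph $\Graph$). Since $\rcnt = \mxdeg(\Graph)$ is a function of the pattern alone and is independent of $\X$, I can split the failure budget evenly across the two sources. Concretely, I would invoke \autoref{th:rad_kernel_risk_bound} with confidence parameter $\fail/2$ and, independently, \autoref{lem:mxdeg_erm} with confidence parameter $\fail/2$; a union bound then shows both events hold simultaneously with probability at least $1-\fail$.

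On the event that both bounds hold, the argument reduces to substitution. \autoref{th:rad_kernel_risk_bound} with $\fail/2$ gives
\begin{equation*}
\Risk^{\1}(\hyp_{\K,\Data}) \leq \Remp^{\conf}(\hyp_{\K,\Data}) + \frac{4\Knrm}{\conf\sqrt{\ntr}} + \sqrt{\frac{\rcnt+1}{2\ntr}\ln\frac{2}{\fail}},
\end{equation*}
while \autoref{lem:mxdeg_erm} with $\fail/2$ gives $\rcnt \leq \frac{2\ntr}{\npt}\ERMconst$, since the replacement $\fail \mapsto \fail/2$ turns $\ln\frac{\npt}{\fail}$ into $\ln\frac{2\npt}{\fail}$, matching exactly the definition of $\ERMconst$. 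I then bound the two error terms separately. For the Rademacher term, $\ntr \geq \npt/2$ yields $1/\sqrt{\ntr} \leq \sqrt{2}/\sqrt{\npt}$, so $\frac{4\Knrm}{\conf\sqrt{\ntr}} \leq \frac{\sqrt{32}\,\Knrm}{\conf\sqrt{\npt}}$. For the confidence term, substituting the degree bound gives $\frac{\rcnt+1}{2\ntr} \leq \frac{\ERMconst}{\npt} + \frac{1}{2\ntr}$, and using $\ntr \geq \npt/2$ once more to replace $\frac{1}{2\ntr}$ by $\frac{1}{\npt}$ collapses this to $\frac{\ERMconst+1}{\npt}$, producing the stated third term.

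The only delicate point, and the step I would state most carefully, is the merging of the two randomness sources. Because the labeler samples independently of the instance values, I can condition on a fixed pattern $\Graph$ (hence a fixed $\rcnt$), apply \autoref{th:rad_kernel_risk_bound} conditionally, and take expectation over $\Graph$ to conclude that the Rademacher event fails with probability at most $\fail/2$ unconditionally. This is what lets the two high-probability statements be combined without a circular dependence between $\rcnt$ and $\X$. Everything else is the routine algebra above, and the two appeals to $\ntr \geq \npt/2$ are precisely where that hypothesis of the theorem is consumed.
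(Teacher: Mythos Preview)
Your proposal is correct and follows essentially the same route as the paper: split the failure probability $\fail$ evenly between \autoref{th:rad_kernel_risk_bound} and \autoref{lem:mxdeg_erm}, union-bound, and then use $\ntr \geq \npt/2$ (equivalently $1/\ntr \leq 2/\npt$) to simplify both the Rademacher term and the confidence term. Your explicit discussion of conditioning on the subsampling pattern to decouple the two randomness sources is more careful than the paper's one-line proof, but the underlying argument is identical.
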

\begin{proof}
\autoref{eq:rad_kernel_risk_bound_erm} follows from \autoref{th:rad_kernel_risk_bound} and \autoref{lem:mxdeg_erm}, by allowing failure probability $\fail/2$ to \autoref{eq:rad_kernel_risk_bound} and $\fail/2$ to \autoref{eq:mxdeg_erm}. We then simplify the bound by leveraging the fact that $1/\ntr \leq 2/\npt$. 
%\autoref{eq:stab_kernel_risk_bound_erm} follows from a similar procedure, using \autoref{th:stab_svm_risk_bound} as a basis.
\end{proof}
We point out that these bounds have a natural interpretation. Whereas \citet{agarwal:jmlr09} invoke parameterized families of edge distributions, we consider a simple, intuitive learning setup in which the only parameter is the size $\ntr$ of the training set. If 
$\ntr \geq \npt\ln\npt$, then $\ERMconst = \bigO(1)$, and we obtain a uniform convergence rate of $\bigO(1 / \sqrt{\npt})$. For $\ntr \geq \npt/2$, we have that $\ERMconst = \bigO(\sqrt{\ln\npt})$, and the rate is still $\tilde\bigO(1/\sqrt{\npt})$.
%Note that, for $\ntr \geq \frac{3\npt}{2} \ln\frac{3\npt}{\fail}$, $\ERMconst$ reduces to $\bigO(1)$. We therefore obtain an asymptotic convergence rate of $\bigO(1 / \sqrt{\npt})$ for suitably large training sizes. Furthermore, if $\ntr$ is of order $\Omega(\npt)$, then the rate becomes $\bigO\( (\ln\npt)^{\frac{1}{4}} / \sqrt{\npt} \)$.

%%% ACKNOWLEDGEMENTS
\subsection*{Acknowledgements}
This work was partially supported by NSF CAREER grant 0746930 and NSF grant IIS1218488.

\break

%%% BIBLIOGRAPHY
\bibliographystyle{plainnat}
\bibliography{br_risk_bounds}

\newpage

%%% APPENDIX
\appendix
\section{Proof of \autoref{lem:mxdeg_erm}}
\label{sec:proof_mxdeg_erm}

% proof of max degree lemma
For each pair of vertices $\{\node_i,\node_j\}$, define a random variable $\Edges_{i,j} \defeq \1[\{i,j\} \in \Edges]$. Note that $\deg(\node_i) = \sum_{j \neq i} \Edges_{i,j}$ and, via linearity of expectation,
\begin{equation*}
\Ep[\deg(\node_i)]
	= \sum_{j \neq i} \Ep[ \Edges_{i,j} ]
	= (\npt-1) \cdot \frac{ { {\npt\choose2}-1 \choose \ntr-1} }{ { {\npt\choose2} \choose \ntr} } 
	= \frac{2\ntr}{\npt}.
\end{equation*}
Since the expected degree is uniform, let $\avgdeg \defeq \Ep[\deg(\node_i)]$. Using the union bound, for any $t > 0$, we have that
\begin{align*}
\Pr[ \mxdeg(\Graph) \geq t ]
	&\leq \sum_{i=1}^{\npt} \Pr[ \deg(\node_i) \geq t ] \\
	&= \sum_{i=1}^{\npt} \Pr\left[ \sum_{j \neq i} \Edges_{i,j} \geq t \right].
\end{align*}
Though the variables $\{ \Edges_{i,j} \}_{j \neq i}$ are dependent, it is straightforward to show that they are negatively correlated. Therefore, we can apply  multiplicative Chernoff bounds, with $t = \avgdeg(1+\eps)$, to obtain
\begin{equation*}
\Pr[ \mxdeg(\Graph) \geq t ]
	\leq \sum_{i=1}^{\npt} \exp(-\avgdeg\eps^2/3)
	= \npt \exp\(\frac{-2\ntr\eps^2}{3\npt}\).
\end{equation*}
Setting this equal to $\fail$, then solving for $\eps$ and $t$ completes the proof.

\end{document}